\documentclass{article}

\usepackage[margin=0.9in]{geometry}
\usepackage{natbib}

\usepackage[utf8]{inputenc} %
\usepackage[T1]{fontenc}    %
\usepackage{hyperref}       %
\usepackage{url}            %
\usepackage{booktabs}       %
\usepackage{amsfonts}       %
\usepackage{nicefrac}       %
\usepackage{microtype}      %
\usepackage{xcolor}         %

\usepackage{amsmath}
\usepackage{amsthm}
\usepackage{amssymb}
\usepackage{tikz}
\usetikzlibrary{trees, positioning, shapes, arrows.meta, calc}

\newtheorem{definition}{Definition}
\newtheorem{remark}{Remark}
\newtheorem{example}{Example}

\usepackage[algo2e,ruled]{algorithm2e}
\usepackage{algorithm}
\usepackage{algpseudocode}
\usepackage{xcolor}

\usepackage{thmtools}
\usepackage{thm-restate}

\newcommand{\R}{\mathbb{R}} 
\newcommand{\N}{\mathbb{N}} 
\newcommand{\Z}{\mathbb{Z}} 
\newcommand{\calL}{\mathcal{L}} 
\newcommand{\calX}{\mathcal{X}} 
\newcommand{\calG}{\mathcal{G}}
\newcommand{\calY}{\mathcal{Y}}

\newcommand{\calR}{\mathcal{R}}

\title{Mistake-Bounded Language Generation}

\author{%
  Jon Kleinberg\thanks{Authors listed in alphabetical order.}%
    \\
  Departments of Computer Science \\and Information Science\\
  Cornell University\\
  Ithaca, NY\\
  \and
  Charlotte Peale \\
  Department of Computer Science\\
  Stanford University \\
  Stanford, CA \\
  \and
  Omer Reingold \\
  Department of Computer Science\\
  Stanford University \\
  Stanford, CA \\
}

\begin{document}

\maketitle

\begin{abstract}
  We investigate the learning task of language generation in the limit, but shift focus from the traditional time-of-last-mistake metric of a generator's success to a new notion of \emph{mistake-bounded generation}. While existing results for language generation in the limit focus on guaranteeing eventual consistency, they are blind to the cumulative error incurred during the learning process. We address this by shifting the goal to minimizing the total number of invalid elements output by a generation algorithm. We establish a formal reduction to the Learning from Correct Demonstrations framework of Joshi et al., enabling a general recipe for deriving mistake bounds via weighted update rules. For finite classes, we provide an algorithm that simultaneously achieves an optimal last-mistake time of $\mathsf{Cdim}(\mathcal{L})$ and a mistake bound of $\lfloor \log_2 |\mathcal{L}| \rfloor$, whereas for the non-uniform setting of countably infinite streams of languages, we prove a fundamental trade-off: achieving logarithmic mistakes $O(\log i)$ necessarily precludes convergence guarantees established in prior work. Finally, we show that our framework can be extended to accommodate noisy adversaries and guarantee mistake bounds that scale with the adversary's suboptimality.

\end{abstract}

\section{Introduction}
The capability of Large Language Models (LLMs) to generate coherent text has sparked a renewed interest in the theoretical underpinnings of language generation. While early learning-theoretic work such as Gold and Angluin's work on language identification in the limit \citep{gold1967language, angluin1980inductive} focused on the difficult task of identifying a language or grammar from positive samples, recent work by \cite{kleinberg2024language} hinges on the observation that a successful large language model need not fully identify its language of interest, but simply understand enough to generate new, valid examples from the language. 

This idea has been explored through the model of \emph{language generation in the limit}, first proposed by \cite{kleinberg2024language}. In this setting, a generator plays an online game against an adversary. At each step, the generator must output a new element $\hat x_t$ that has not yet been outputted by the adversary, and then the adversary reveals a new element $x_t$ from a target language $L^*$, known to come from some class of languages $\calL$. The generator succeeds if it eventually converges to outputting only elements from $L^*$. Subsequent work by \cite{li2024generation} further refined what it means to \emph{eventually converge}, dividing the space of language generation requirements into three tiers, listed in order of increasing strength: language generation in the limit, non-uniform generation, and uniform generation. 

Existing work in all three of these settings quantifies a generator's success through the \emph{time-of-last-mistake}, i.e. the timestep after which the generator always outputs from the target language $L^*$. While this metric guarantees eventual consistency, it is blind to the cost of learning incurred along the way. A generator could theoretically hallucinate invalid outputs for a million steps before converging, or could generate perfectly for a long time before making a single mistake after a million steps. In both cases, the ``time of last mistake'' might be identical, yet the practical utility of these two generators is vastly different. The following example shows that this latter case is indeed possible: 

\begin{example}[Separation between time-of-last mistake and number of mistakes.]\label{ex:venn}
    Consider a language class composed of two languages $\calL = \{L_1, L_2\}$, where $|L_1 \cap L_2| = n < \infty$. Consider the adversary that enumerates the elements falling in $L_1 \cap L_2$ for the first $n$ timesteps. At timestep $t_{n + 1}$, there are no unseen elements remaining in the intersection, and thus the generator is forced to output an element in $L_1 \setminus L_2$ or $L_2 \setminus L_1$. In either case, the adversary can force a mistake by observing the generator's choice, and then enumerating elements from the other language. Thus, the adversary can force the learner's time of last mistake to be as large as $n + 1$. 

    In contrast, if the generator also enumerates $L_1 \cap L_2$ while there are still unseen elements in the intersection, the generator is guaranteed to only make a single mistake at time $t = n + 1$ when it must play the first point outside of the intersection. 
\end{example}

Clearly the total number of mistakes is at most the time of the last mistake (since the generator outputs at most one element per time step), but Example~\ref{ex:venn} shows that the total number of mistakes can be arbitrarily lower: in the example, the time of the last mistake can be arbitrarily high depending on the choice of language class, but the number of mistakes is always bounded by 1. The fact that these two performance metrics operate in different ways raises a fundamental set of questions analogous to the study of mistake bounds in online classification~\citep{littlestone1988learning}:

\begin{quote}
    \textbf{Q1.} \textit{Are there upper bounds on the total number of mistakes a generator makes for general language classes that are dramatically better than the trivial upper bound inherited from the time of the last mistake?} 
    \\[5pt]
    \textbf{Q2.} \textit{Does minimizing the total number of mistakes a generator makes require sacrificing optimal guarantees on the time when it makes its last mistake?}
\end{quote}

We investigate this question by studying the interplay between the two metrics of last-mistake-time and total mistakes. While bounding mistakes is valuable on its own, an ideal generator would provide a dual guarantee: minimizing the total volume of errors while also ensuring that the generator converges to the true language as quickly as possible.

\subsection{Our Contributions}\label{sec:contributions}
In this work, we introduce the framework of \emph{Mistake-Bounded Language Generation}. We demonstrate that while the total number of mistakes is bounded by the time-of-last-mistake, it can be arbitrarily smaller. We provide algorithms that minimize this new metric and prove fundamental trade-offs between minimizing errors and minimizing convergence time.

Our primary contributions are:

\begin{enumerate}
    \item \textbf{A General Recipe for Mistake-Bounded Generation via Learning from Demonstrations:} We prove bounds on the total number of mistakes for language generation by first establishing a formal connection between language generation and the setting of Learning from Correct Demonstrations (LfD) \citep{joshi2025learning}.
    Leveraging this reduction, we adapt their techniques to provide mistake bounds in the language generation setting. Crucially, while the original LfD technique is restricted to finite classes, we extend the analysis to handle \emph{countably infinite} language streams through the use of a growth function and non-uniform prior over languages.

    \item \textbf{Optimal Bounds for Finite Classes:} For finite language classes, we prove an upper bound of $\min\{\log_2 |\mathcal{L}|, \mathsf{Cdim}(\mathcal{L})\}$ on the total mistakes. This directly addresses (Q1) above for the finite class case: While the closure dimension has been shown to tightly govern the optimal time-of-last-mistake for finite classes, our bound shows that we can do dramatically better in terms of total mistakes when $\log |\calL|$ is much smaller than $\mathsf{Cdim}(\calL)$. Furthermore, in response to (Q2) as posed above, we present a hybrid algorithm that \emph{simultaneously} achieves this mistake bound and the optimal time-of-last-mistake, proving that one need not sacrifice convergence speed to minimize errors in the finite class setting.

    \item \textbf{Non-Uniform Bounds and Trade-offs:} In Section \ref{sec:infinite-classes}, we apply our general recipe to infinite language classes $\calL = \{L_1, L_2, \dots\}$, deriving a non-uniform mistake bound of $O(\log i)$ for the $i$-th language. This provides a positive answer to (Q1) for the non-uniform setting, but reveals a more nuanced answer to (Q2) than the finite case: here, minimizing mistakes \emph{does} require sacrificing bounds on the time-of-last-mistake. We formalize this via a fundamental trade-off (Theorem~\ref{thm:non-uniform-tradeoff}), proving that any algorithm achieving the non-uniform convergence time guarantees of \citep{charikar2024exploring} must suffer a number of mistakes \emph{linear} in $i$, whereas our algorithm achieves a logarithmic bound by allowing for a slower convergence time.

    \item \textbf{Robustness to Noise:} Finally, we show that our reduction to the LfD setting naturally handles adversarial noise. We give mistake bounds for finite and countably infinite language classes in Lemmas~\ref{lem:noisy-finite} and \ref{lem:noisy-infinite}, respectively, and contrast these results with prior work on last-mistake-time guarantees in the presence of noise.
\end{enumerate}

\section{Related Work}

Building on \cite{kleinberg2024language}, subsequent work has advanced our understanding of language generation in the limit and investigated various extensions of the model. \cite{li2024generation} introduce and study the stronger models of non-uniform and uniform generation, and give characterizations of when these stronger generation goals are possible. \cite{charikar2024exploring} also give a simple algorithm for non-uniform generation that matches and in some cases improves the guarantees of \cite{li2024generation}, while exploring a number of other variants to the language generation model. 

Other follow-up works have sought to further understand which classes are and are not generatable. \cite{charikar2025pareto} study the pareto-frontier of non-uniform generation guarantees, while \cite{hanneke2025union} study whether finite unions of generatable classes are guaranteed to satisfy generatability, and answer this in the negative. \cite{anastasopoulos2026safe} study a variant termed \emph{safe} language generation, in which a generator must output elements from the target language while avoiding a subset of harmful elements. \cite{karbasi2025possibility} consider the problem of auditing a generator to determine whether it is hallucinating. \cite{arenas2025language} study the problem of language generation through a computational lens, and show barriers to computationally-efficient generation in various settings.

A significant line of works have considered the possibilities and impossibilities around \emph{language generation with breadth}, in which a generator must not only output consistent elements, but guarantee eventual ``coverage'' of the target language according to various definitions \cite{kalavasis2024characterizations, kalavasis2025limits, charikar2024exploring, peale2025representative, kleinberg2025density, kleinberg2025language}.

Finally, a last variant that has received significant study is language generation in the presence of \emph{noisy} adversaries. We overview these works in Section~\ref{sec:noisy-extension}.

\section{Mistake-Bounded Language Generation}
We begin by overviewing the learning task of language generation in the limit, introduced by \cite{kleinberg2024language}. We will focus on the settings of uniform and non-uniform language generation by \cite{li2024generation}. We then define our new notion of \emph{mistake-bounded generation}, and how it fits into this existing framework. 

Given a countably infinite universe $\calX$, a language generation in the limit problem is defined with respect to a (potentially infinite) language class $\calL$, where each language $L \in \calL$ specifies a particular infinite subset of the universe $\calX$. When $\calL$ is a countably infinite class, we assume that it is provided to us in the form of an infinite stream of languages $L_1, L_2, ....$. 

The learning task of language generation in the limit is formulated as an online game between a generation algorithm and an adversary. At each timestep $t$, the generator outputs an element $\hat x_t \in \calX$, and then the adversary reveals an element $x_t \in \calX$ (See Remark~\ref{rem:expositional-choices} for a note on order-of-play)%
. At each timestep, the generator must operate under the constraint that $\hat x_t$ cannot be an element that has been previously outputted by the adversary, e.g. $\hat x_t \not\in \{x_1, ..., x_{t - 1}\}$.  %

We make the following assumptions about the adversary. First, we assume that the adversary's outputs realize some language $L^* \in \calL$, i.e. there exists an $L^* \in \calL$ such that $x_t \in L^*$ for all timesteps $t \in \mathbb{N}$. Second, we assume that each of the adversary's outputs are unique, i.e. $x_t \not\in \{x_1, ..., x_{t - 1}\}$. See Remark~\ref{rem:expositional-choices} for further discussion about this assumption. 

At each timestep, the generator gets no information about the identity of the true language $L^*$ beyond the transcript of elements outputted by the adversary, $x_1, ..., x_t$, which we will denote as $x_{1:t}$. Nevertheless, the generator is judged by its ability to eventually generate elements from $L^*$. The particular notion of \emph{eventually} is what separates the three tiers of language generation, listed in order of strength: language generation in the limit, non-uniform generation, and uniform generation. In this paper, we focus on the latter two, and define them below. 

\begin{definition}[Non-Uniform Language Generation]
    A generation algorithm $\mathcal{G}$ satisfies \\non-uniform generation for a language class $\calL$ if for every $L \in \calL$, there exists a finite $t(\mathcal{G}, L) < \infty$ such that, when given as input any stream of unique adversarial outputs $x_1, x_2, ...$ realized by $L$, for all timesteps $t \geq t(\mathcal{G}, L)$, the algorithm is guaranteed to generate from $L$, i.e. $\hat x_{t} \in L \setminus \{x_{1:t-1}\}$. 
\end{definition}

In non-uniform generation, the generator is guaranteed to start generating elements consistent with the true language at some finite timestep. However, the particular timestep where this is guaranteed to happen can be different for each language. This means that it's possible that even though $t(\mathcal{G}, L)$ is finite for each language, $\max_{L \in \calL} t(\mathcal{G}, L)$ is infinite. Uniform generation strengthens the requirement to hold over all languages. In other words, the value of this max expression must be finite:

\begin{definition}[Uniform Language Generation]
    A generation algorithm $\mathcal{G}$ satisfies uniform generation for a language class $\calL$ if there exists a time $T \in \mathbb{N}$ such that for adversarial outputs $x_1, x_2, ...$ realized by a language $L \in \calL$, for all timesteps $t \geq T$ the generator is guaranteed to generate from $L$, i.e. $\hat x_t \in L \setminus \{x_{1:t-1}\}$. 
\end{definition}

\begin{remark}\label{rem:expositional-choices}
    We note two deviations from standard generation setups, both made purely for expositional ease. First, we reverse the standard order of play so that the generator moves first. This choice does not materially affect theoretical guarantees, as any algorithm designed for one setting can be adapted to the other, shifting the total mistake bound or time-of-last-mistake by at most 1. Second, we assume the adversary outputs only unique elements. Existing definitions already measure eventual consistency times ($t(\mathcal{G}, L)$ and $T$) against the number of \emph{unique} items seen, preventing the adversary from stalling learning by indefinitely repeating elements. Our setup simply ignores steps where a duplicate is output.
\end{remark}

\subsection{A new success metric: number of mistakes}

One can view the guarantees of uniform and non-uniform generation as providing a bound on the \emph{time-of-last-mistake}, i.e., the last timestep $t$ where the generator outputs an element that is not part of the true language. As illustrated in Example~\ref{ex:venn}, the \emph{total number of mistakes} made by the generator can be significantly smaller than the time of last mistake.

In this work, we aim to minimize this cumulative error. While for finite classes we simply aim to bound the total mistakes in terms of $|\calL|$, for the countably infinite classes considered in Section~\ref{sec:infinite-classes}, we require a definition that allows the mistake bound to scale with the complexity (position) of the target language in the stream:

\begin{definition}[Non-Uniform Mistake Bound]
    Given a countably infinite language class equipped with a fixed enumeration $\calL = \{L_1, L_2, ...\}$, a generator $\calG$ and a target language $L_i$, $\calG$'s \emph{non-uniform mistake bound} with respect to $L_i$ is defined as the worst-case number of mistakes over all possible streams of unique elements from $L_i$:
    \[M_{\mathrm{non\text{-}unif}}(\calG, i) := \sup_{x_1, x_2, \dots \in L_i} \sum_{t = 0}^{\infty} \mathbf{1}\big[\calG(x_{1:t}) \notin L_i\big].\]
\end{definition}

Thus, the goal for infinite language classes is to provide a finite bound on $M_{\mathrm{non\text{-}unif}}(\calG, i)$ for every language $L_i \in \calL$ .

\section{A General Recipe for Mistake-Bounded Generation}\label{sec:general-recipe}

While prior algorithms for language generation in the limit rely on keeping track of which languages are still consistent with the adversarial stream, bounding the total number of mistakes requires a refined approach that not only keeps track of which languages are consistent, but prioritizes generating from languages on which the generator has already made many mistakes.

Instead, we adopt a \emph{weighted} approach derived from a reduction to the "Learning from Correct Demonstrations" (LfD) framework introduced by \cite{joshi2025learning}. In Appendix~\ref{sec:lfd-inf-rewards}, we show how the problem of language generation can be mapped to the LfD setting. 

Our core algorithmic strategy builds upon the multiplicative weights update technique proposed by \cite{joshi2025learning} for the LfD setting. However, a direct application of their result is insufficient due to the fact that their result only implies mistake bound for finite language classes. The setting of language generation, particularly non-uniform generation, inherently requires handling \emph{countably infinite} streams of languages.

To bridge this gap, we introduce Algorithm~\ref{alg:non-uniform-mistake-bound}, which extends the ideas of \cite{joshi2025learning} to infinite language classes. This extension requires two new components not present in the original LfD setting:
\begin{enumerate}
    \item A \textbf{prior weight distribution} $w_0: \mathbb{N} \to \mathbb{R}_{\geq 0}$, representing our initial belief in the likelihood of each language.
    \item A \textbf{growth function} $f: \mathbb{N} \to \mathbb{N}$, which governs the ``capacity'' of the learner by determining how many languages are actively considered at timestep $t$. This growth function allows the algorithm to handle infinite streams with finite computational resources.
\end{enumerate}

\begin{remark}
    While we present this algorithm in the context of language generation, this extension also yields a novel result for the original LfD setting: it provides the first mistake bounds for learning from infinite streams of reward functions (see Appendix~\ref{sec:lfd-inf-rewards}).
\end{remark}

We show that our algorithm yields the following mistake bound guarantee:

\begin{restatable}{thm}{generalrecipe}\label{thm:general-recipe}
   For any countably infinite language class equipped with a fixed enumeration $\calL = \{L_1, L_2, ...\}$, associated weights $w_0: \mathbb{N} \rightarrow \R_{\geq 0}$ such that $\sum_{i = 1}^{\infty}w_0(i) \leq W < \infty$, and non-decreasing growth function $f:\N \rightarrow \N$, the generations outputted by Algorithm~\ref{alg:non-uniform-mistake-bound} guarantee a non-uniform mistake bound of $M(\mathcal{G}, L_i) \leq f^{-1}(i) + \lfloor \log_2(W/w_0(i))\rfloor$ for each $i \in \N$, where if there exists a $t'$ such that $f(t') \geq i$, $f^{-1}(i)$ is the largest value $t$ such that $f(t) < i$, or 0 if every for every $t'\in \mathbb{N}$ $f(t')$ is at least $i$, and otherwise $f^{-1}(i) = \infty$. 
\end{restatable}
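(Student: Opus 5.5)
The plan is a potential-function argument in the style of weighted majority, following the multiplicative-weights analysis of \cite{joshi2025learning} as transported to generation by the reduction of Appendix~\ref{sec:lfd-inf-rewards}. I split the mistakes made against a target $L_i$ into two groups:
\begin{itemize}
\item mistakes at timesteps before $L_i$ enters the active pool, that is, timesteps $t$ with $f(t) < i$;
\item mistakes at timesteps after $L_i$ is active.
\end{itemize}
Since $f$ is non-decreasing, the timesteps with $f(t)<i$ form an initial segment of $\N$. If $f$ never reaches $i$, the bound is $\infty$ and there is nothing to prove. Otherwise the first group consists of at most $f^{-1}(i)$ timesteps, with the convention $f^{-1}(i)=0$ when the segment is empty. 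Each such timestep contributes at most one mistake, so this group costs at most $f^{-1}(i)$ mistakes. All remaining work concerns the second group, where I aim to show at most $\lfloor \log_2(W/w_0(i))\rfloor$ mistakes.

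For the second group I will use the weights maintained by Algorithm~\ref{alg:non-uniform-mistake-bound}. At time $t$, language $L_j$ with $j \le f(t)$ that is still consistent with $x_{1:t-1}$ carries weight $w_t(j) = w_0(j)\,2^{m_j(t)}$. Here $m_j(t)$ counts past steps, taken while $L_j$ was active, at which the generator's output $\hat x_s$ fell outside $L_j$, i.e.\ counterfactual mistakes with respect to $L_j$. Languages contradicted by some revealed $x_s$ get weight $0$. The target $L_i$ is never contradicted, because the stream is realized by $L_i$. Hence at any time $t$ after activation, $w_t(i) = w_0(i)\,2^{M_t}$, where $M_t$ is the number of second-group mistakes so far. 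The second group then follows from the invariant
\[
\Phi_t \;:=\; \sum_{j \le f(t)} w_t(j) \;\le\; W \quad\text{for all } t.
\]
Indeed, the invariant gives $w_0(i)2^{M_t} \le W$, so $M_t \le \log_2(W/w_0(i))$. Integrality of $M_t$ gives the floor, and letting $t\to\infty$ bounds the total.

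To establish the invariant I will check two effects.
\begin{enumerate}
\item Activating new languages (increasing $f(t)$) adds at most their prior weights. Consequently the total prior mass ever introduced is at most $\sum_j w_0(j) \le W$.
\item Each generation step does not increase the weight already in play. The selection rule of Algorithm~\ref{alg:non-uniform-mistake-bound} should choose $\hat x_t$, an unseen element, so that the consistent active languages excluding $\hat x_t$ carry at most half of the current consistent weight. The step's update then doubles that excluded part. I expect the argument to pair this doubling with the zeroing of languages that the next revealed element $x_t$ contradicts, so that the net change is $\le 0$. This is exactly the LfD halving lemma of \cite{joshi2025learning}, and via the reduction I would invoke it rather than reprove it.
\end{enumerate}

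I expect the main obstacle to be step 2 in the infinite, growing-pool setting. The potential must be measured only over the active prefix $\{1,\dots,f(t)\}$, and the halving guarantee must hold with respect to that finite pool even though the pool changes between steps. The plan is to order each timestep carefully: first activate new languages, which only adds fresh prior mass; then make the selection and update on the now-fixed finite pool, to which the finite-class LfD lemma applies verbatim. Combining the two groups gives $M(\mathcal{G}, L_i) \le f^{-1}(i) + \lfloor \log_2(W/w_0(i))\rfloor$.
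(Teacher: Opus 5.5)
Your overall structure matches the paper's. You split off the at most $f^{-1}(i)$ steps before $L_i$ is active, note that the target's weight is $w_0(i)2^{M_t}$, and bound it by a potential that grows only through fresh prior mass. The gap is in step 2, the only nontrivial step.

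The property you attribute to the selection rule is false in general. You claim the consistent active languages excluding $\hat x_t$ carry at most half the weight. Take three active, consistent languages of equal weight whose unseen parts are pairwise disjoint. Every unseen element lies in at most one of them, so whatever the argmax picks, the excluded languages carry $2/3$ of the weight. Even if such a halving held, it would not give a nonpositive net change. The update adds $w(D)$, where $D$ is the set of languages containing $x_t$ but not $\hat x_t$. It removes $w(Z)$, where $Z$ is the set of languages not containing $x_t$. A bound on the excluded weight relative to the total says nothing about $w(D)$ versus $w(Z)$. For instance, if $Z=\emptyset$, halving still allows $w(D)$ up to half the total, and the potential could grow by a factor $3/2$. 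The analysis of \cite{joshi2025learning} that you intend to invoke is not a halving lemma either, so citing it does not supply this step.

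The missing idea is to compare $\hat x_t$ with the adversary's element $x_t$ itself. Because the adversary's outputs are unique, $x_t \in \calX \setminus x_{1:t-1}$ was a feasible candidate in the argmax. Write $S(x) := \sum_{j \le f(t)} w_{t-1}(j)\mathbf{1}[x \in L_j]$ and $\Phi_{t-1} := \sum_{j \le f(t)} w_{t-1}(j)$. Then $S(x_t) \le S(\hat x_t)$, and hence
\[
\sum_{j \le f(t)} w_t(j) = S(x_t) + \sum_{j \le f(t):\, x_t \in L_j,\, \hat x_t \notin L_j} w_{t-1}(j) \le S(x_t) + \bigl(\Phi_{t-1} - S(\hat x_t)\bigr) \le \Phi_{t-1}.
\]
This is exactly Lemma~\ref{lem:weight_fn_bound}. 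In LfD terms, it says the learner's weighted reward dominates the demonstrator's. It is also the one place where uniqueness of the adversary's outputs enters. With this inequality in place of the halving claim, the rest of your argument goes through as written.
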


The proof (provided in Appendix \ref{sec:general-recipe-pf}) relies on a potential function argument tracking the total weight of the system. Intuitively, every mistake made while $L_i$ is the target causes $w_t(i)$ to double. However, we also show that the actions taken by the generator ensure that the total weight of the system is bounded, thus preventing $w_t(i)$ from doubling indefinitely.

In the following sections, we instantiate this recipe with different weights $w_0$ and growth functions $f$ to derive mistake bounds for both finite (Section~\ref{sec:finite-classes}) and infinite (Section~\ref{sec:infinite-classes}) classes.

\begin{algorithm}[H] 
    \DontPrintSemicolon
    \SetAlgoNoLine
    
    \caption{Non-Uniform Mistake-Bounded Generation algorithm for a provided initial weighting of languages and growth function.}
    \label{alg:non-uniform-mistake-bound}
    
    \KwIn{Universe $\calX$, class $\mathcal{L} = \{L_1, L_2, \dots\}$, weights $w_0:\mathbb{N} \rightarrow \mathbb{R}_{\geq 0}$, growth function $f: \mathbb{N} \rightarrow \mathbb{N}$}
    
    \For{$t = 1, \dots, \infty$}{
        Output $\hat x_t := \arg\max_{x \in \mathcal{X} \setminus x_{1:t-1}}\sum_{i = 1}^{f(t)}w_{t - 1}(i)\mathbf{1}[x \in L_i]$\;\\
        Observe adversary's $x_t$\;\\
        Update weights for $i \in [f(t)]$:\\
        \[w_t(i) = \begin{cases}
            0 & x_t \notin L_i\\
            w_{t - 1}(i) & x_t, \hat{x}_t \in L_i\\
            2w_{t - 1}(i) & x_t \in L_i, \hat{x}_t \notin L_i
        \end{cases}\]\;
        
        Initialize weights for $i \in \{f(t) + 1, \dots, f(t + 1)\}$:\\
        \[w_t(i) = \begin{cases}
            w_0(i) & x_{1:t} \subseteq L_i\\
            0 & \text{otherwise}
        \end{cases}\]\;
    }
\end{algorithm}

\section{Generating from Finite Language Classes with $\log_2 |\calL|$ mistakes}\label{sec:finite-classes}
We begin by focusing on the special case of finite language classes. Existing results by \cite{li2024generation} and \cite{kleinberg2024language} have shown that any finite language class is uniformly generatable. Moreover, they prove that the optimal \emph{time-of-last-mistake} is tightly characterized by the \emph{closure dimension}.

\begin{definition}
    The \emph{closure dimension} of a language class $\calL$ is the size of the largest finite intersection of a sub-collection of languages from $\calL$:
    \[\mathsf{Cdim}(\calL) = \max \{ \left|\bigcap_{L \in \calL'} L\right| : \calL' \subseteq \calL, \left|\bigcap_{L \in \calL'} L\right| <\infty\}\]
\end{definition}

While \cite{li2024generation} provide an algorithm that guarantees that the last mistake happens by time $t = \mathsf{Cdim}(\calL)$, they also show that this bound is tight: there exist adversarial streams that force the generator to make a mistake at $t = \mathsf{Cdim}(\calL)$. However, this metric does not consider the total number of mistakes made on the way to the time of last mistake.

\subsection{A Hybrid Algorithm: Minimizing Mistakes and Convergence Time}
We now show that by instantiating our general recipe for mistake-bounded language generation (Algorithm~\ref{alg:non-uniform-mistake-bound}) with uniform weights, we can achieve a mistake bound that can be arbitrarily smaller than the closure dimension. Furthermore, we show that our algorithm's decisions coincide with critical decision points in the uniform generation algorithm of \cite{li2024generation}, thus satisfying \emph{both} optimality criteria simultaneously.

Let $\calL$ be a finite class of size $N$. We instantiate Algorithm~\ref{alg:non-uniform-mistake-bound} with uniform weights $w_0(i) = 1$ for all $L_i \in \calL$ and a static growth function that considers all languages at every point in time: $f(t) = N$ for all $t$. While this algorithm alone only provides a mistake bound, we show that its actions also satisfy a critical property: whenever the intersection of all consistent languages has infinite size, it plays an unseen element from this intersection. This is exactly the property required by the uniform generation algorithm of \cite{li2024generation}, and thus we can show that our algorithm also enjoys a last-mistake-time guarantee of $\mathsf{Cdim}(\calL)$. 

This analysis yields the following dual guarantee:

\begin{restatable}{thm}{finitehybridguarantee}\label{thm:generating-finite-class}
    For any finite class $\calL$, there exists a generation algorithm that uniformly generates from $\calL$ and guarantees that for any target $L^* \in \calL$ and adversarial sequence consistent with $L^*$:
    \begin{enumerate}
        \item The total number of mistakes is at most $\min\{\lfloor \log_2 |\calL|\rfloor, \mathsf{Cdim}(\calL)\}$.
        \item The time of last mistake is at most $\mathsf{Cdim}(\calL)$.
    \end{enumerate}
\end{restatable}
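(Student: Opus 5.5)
We take Algorithm~\ref{alg:non-uniform-mistake-bound} with $w_0(i)=1$ for all $i\le N=|\calL|$ and $f(t)=N$ for all $t$. Ties in the $\arg\max$ are broken in a specific way, chosen below. The proof has three steps: (i) a $\lfloor\log_2 N\rfloor$ mistake bound from Theorem~\ref{thm:general-recipe}; (ii) a structural property of the weighted-majority choice when the consistent languages have an infinite common intersection; (iii) a last-mistake bound of $\mathsf{Cdim}(\calL)$, which also caps the mistake count by $\mathsf{Cdim}(\calL)$.

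\textbf{Step (i).} Extend the weights by $w_0(i)=0$ for $i>N$, so that $W=N$. Since $f(t)=N\ge i$ for every $t$, we have $f^{-1}(i)=0$. Theorem~\ref{thm:general-recipe} then gives at most $\lfloor\log_2 N\rfloor$ mistakes for every target $L_i$. This holds for any tie-breaking rule, because the theorem's potential argument only uses that $\hat x_t$ maximizes the weighted vote.

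\textbf{Step (ii).} Let $C_t=\{i: x_{1:t}\subseteq L_i\}$ be the consistent languages; these are exactly the indices with positive weight $w_t(i)$. Suppose $I_{t-1}=\bigcap_{i\in C_{t-1}}L_i$ is infinite. Then $I_{t-1}\setminus x_{1:t-1}$ is nonempty, and any element $x$ in it satisfies $x\in L_i$ for every $i$ with $w_{t-1}(i)>0$. So $x$ collects the full total weight, which is the maximum possible score, and $x$ attains the $\arg\max$. We fix the tie-breaking rule to prefer elements of $I_{t-1}\setminus x_{1:t-1}$ whenever this set is nonempty. Tie-breaking does not affect Step (i), so under this rule the algorithm plays an unseen element of the consistent intersection whenever that intersection is infinite. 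Such a play is never a mistake, since $L^*$ is always consistent and so $I_{t-1}\subseteq L^*$.

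\textbf{Step (iii).} Mistakes can only occur at times $t$ where $I_{t-1}$ is finite. In that case $C_{t-1}$ is a sub-collection with finite intersection, and $x_{1:t-1}\subseteq I_{t-1}$. Hence $t-1\le|I_{t-1}|\le\mathsf{Cdim}(\calL)$, so every mistake occurs at a time $t\le\mathsf{Cdim}(\calL)+1$. This is the same reasoning as the \cite{li2024generation} argument, and it must be reconciled with the order-of-play convention of Remark~\ref{rem:expositional-choices}.

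\textbf{Main obstacle.} The off-by-one between $\mathsf{Cdim}+1$ and $\mathsf{Cdim}$ needs care. In our indexing the generator at time $t$ has seen $t-1$ elements. At the boundary time $t=|I_{t-1}|+1$, every element of $I_{t-1}$ has already been seen. I would handle this by counting timesteps from $0$, as in the definition of $M_{\mathrm{non\text{-}unif}}$, so that the time of the last mistake equals the number of elements already revealed, which is at most $\mathsf{Cdim}(\calL)$. Otherwise I would appeal to the remark's shift-by-one equivalence.

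Given the time bound, the mistake count is also at most $\mathsf{Cdim}(\calL)$. Combining this with Step (i) gives the $\min$ in item~1, and the time bound itself is item~2. Uniform generation follows from item~2, since the bound does not depend on the target language.
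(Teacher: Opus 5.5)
\textbf{Gap in the final step.} Your overall route matches the paper's: uniform weights with $f\equiv N$, Theorem~\ref{thm:general-recipe} for the $\lfloor\log_2 N\rfloor$ bound, then showing the weighted $\arg\max$ lands in the consistent intersection, where the paper cites \cite{li2024generation} and you re-derive the closure-dimension bound inline. The last step does not go through. Your Step (iii) correctly shows that, with the generator moving first, every mistake occurs at a step $t\le\mathsf{Cdim}(\calL)+1$. Re-indexing from $0$ does make the last-mistake time at most $\mathsf{Cdim}(\calL)$. But mistakes can then occupy all $\mathsf{Cdim}(\calL)+1$ slots $0,1,\dots,\mathsf{Cdim}(\calL)$, so ``number of mistakes $\le$ time of last mistake'' fails. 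You only get $\min\{\lfloor\log_2 N\rfloor,\mathsf{Cdim}(\calL)+1\}$ mistakes.

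No choice of indexing repairs this in the generator-first order. For two disjoint languages, $\mathsf{Cdim}(\calL)=0$, yet the adversary can force a mistake on the very first output. More generally, the class of Lemma~\ref{lem:finite-class-lower-bound} with $N=2^m$, $m\ge 1$, has $\mathsf{Cdim}(\calL)=m-1$ but forces $m$ mistakes.

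\textbf{How to fix it.} The $\mathsf{Cdim}$ claims hold in the standard adversary-first order of \cite{li2024generation}. You need to rerun your argument there; the ``shift by at most 1'' of Remark~\ref{rem:expositional-choices} is too loose to give the exact bound. At step $t$ the generator has seen $x_{1:t}$, outputs the weighted $\arg\max$ over $\calX\setminus x_{1:t}$, and updates weights using $x_{t+1}$.
\begin{itemize}
\item The potential argument is unchanged, because $x_{t+1}\notin x_{1:t}$ is a feasible competitor in that $\arg\max$. This still gives $\lfloor\log_2 N\rfloor$.
\item A mistake at step $t$ now forces $x_{1:t}=I_t$ with $I_t$ finite, so $t=|I_t|\le\mathsf{Cdim}(\calL)$. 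This bounds both the last-mistake time and the mistake count by $\mathsf{Cdim}(\calL)$.
\end{itemize}
The paper's own proof skips this same point: it cites the adversary-first bound while working in the generator-first order, and its Example~\ref{ex:venn} exhibits the resulting $+1$.

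\textbf{Minor point.} Your tie-breaking rule is unnecessary. Every consistent language has strictly positive weight, so any unseen element outside the consistent intersection scores strictly below the total weight. That strictness is exactly how the paper argues.
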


This result demonstrates a fundamental separation between our two metrics of total mistakes and time-of-last-mistake. It also recovers the intuition of Example~\ref{ex:venn}, in which the time until the last mistake depended on the closure dimension, $|L_1 \cap L_2|$, but there was a strategy that preserved this time-of-last-mistake guarantee while only making $\log_2|\calL| = 1$ mistake. 

We prove Theorem~\ref{thm:generating-finite-class} by bringing together two algorithms with different guarantees. In particular, we consider the uniform generation algorithm proposed by \cite{li2024generation} that guarantees a last-mistake-time upper-bounded by $\mathsf{Cdim}(\calL)$, but not necessarily any bound on the total number of mistakes, and the algorithm for finite language classes that we obtain via a reduction to \cite{joshi2025learning}, which upper-bounds the number of mistakes by $\log_2 |\calL|$, but gives no guarantee on the last mistake time. 

We show that our mistake-bounded algorithm can be viewed as a \emph{sharpened} version of the uniform generation algorithm of \cite{li2024generation}, which coincides with the uniform generation algorithm at critical points, but makes smart decisions when there is leeway between which points to choose to still ensure the time-of-last-mistake guarantee. 

We are thus able to combine the guarantees of both algorithms, and guarantee that the time-of-last-mistake is bounded by $\mathsf{Cdim}(\calL)$, while simultaneously guaranteeing that the total number of mistakes is at most $\log_2 |\calL|$. Because the time-of-last-mistake is also an upper bound on the total number of mistakes, we can improve the total mistake bound to $\min\{\log_2|\calL|, \mathsf{Cdim}(\calL)\}$ when the closure dimension is small.

\begin{algorithm}[H] 
    \DontPrintSemicolon
    \SetAlgoNoLine
    
    \caption{Uniform Generation Algorithm proposed by \cite{li2024generation, kleinberg2024language}}
    \label{alg:uniform-generation-algorithm}
    
    \KwIn{Universe $\calX$, finite class $\mathcal{L}$}
    
    \For{$t = 1, \dots, \infty$}{
        Compute $C = \{L \in \calL: x_{1:t - 1} \subseteq L\}$.\;\\
        \textcolor{red}{If $|\bigcap_{L\in C}L \setminus x_{1:t-1}| > 0$, output $\hat x_t \in \bigcap_{L\in C}L \setminus x_{1:t-1}$. \quad (1)}\\
        Otherwise, output an arbitrary $\hat{x_t} \in \calX \setminus x_{1:t-1}$.
    }
\end{algorithm}

\begin{restatable}[Uniform Generation Guarantee]{thm}{unifgen}
    \citep[((2.2), Lemma 2.3, resp.)]{kleinberg2024language, li2024generation}
    For any finite class $\calL$ and adversarial sequence of unique outputs consistent with $L \in \calL$, the generator specified by Algorithm~\ref{alg:uniform-generation-algorithm} has a last mistake time of at most $\mathsf{Cdim}(\calL)$. 
\end{restatable}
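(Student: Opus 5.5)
We argue that whenever Algorithm~\ref{alg:uniform-generation-algorithm} errs at time $t$, it must have been in the ``otherwise'' branch, and that from that time on the intersection of consistent languages is infinite. Fix the target $L^* \in \calL$ and a stream $x_1, x_2, \dots$ of unique elements of $L^*$. Write $C_t = \{L \in \calL : x_{1:t-1} \subseteq L\}$ and $I_t = \bigcap_{L \in C_t} L$. The set $C_t$ always contains $L^*$, so it is nonempty and $I_t \subseteq L^*$.

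\emph{Step 1: branch (1) never errs.} If branch (1) fires at time $t$, then $\hat x_t \in I_t \subseteq L^*$, so no mistake is made. Consequently a mistake at time $t$ forces $I_t \subseteq x_{1:t-1}$.

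\emph{Step 2: monotonicity.} The sets $C_t$ are non-increasing in $t$, since adding observations only removes languages. Hence $I_t$ is non-decreasing. Also $x_{1:t-1} \subseteq I_t$, because every $L \in C_t$ contains $x_{1:t-1}$. Combining with Step 1, a mistake at time $t$ gives $I_t = x_{1:t-1}$, so $|I_t| = t-1$.

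\emph{Step 3: bounding $t$ by the closure dimension.} Suppose $I_t$ is finite. Then $C_t$ is a sub-collection of $\calL$ with finite intersection, so $|I_t| \le \mathsf{Cdim}(\calL)$. Together with Step 2, a mistake at time $t$ implies $t - 1 \le \mathsf{Cdim}(\calL)$, i.e.\ $t \le \mathsf{Cdim}(\calL)+1$. If instead $I_t$ is infinite, then $I_t \setminus x_{1:t-1}$ is nonempty, so branch (1) fires and there is no mistake by Step 1.

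\emph{Step 4: recovering the exact constant.} The argument above gives $t \le \mathsf{Cdim}(\calL)+1$, off by one from the claim. This off-by-one is the main obstacle, and it depends on the paper's indexing: the generator moves first, and Remark~\ref{rem:expositional-choices} says results may shift by $1$. To obtain exactly $\mathsf{Cdim}(\calL)$, I would count time by the number of observed elements. A mistake made with $t-1$ observations has $t-1 = |I_t| \le \mathsf{Cdim}(\calL)$, so the last mistake occurs after at most $\mathsf{Cdim}(\calL)$ observations. I would state this convention explicitly, matching the original lemma of \cite{li2024generation}.

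Finally, note that $I_t$ infinite at some time stays infinite afterwards, since $I_t$ is non-decreasing by Step 2. So once $t$ exceeds the bound, $I_t$ is infinite and every subsequent output is correct.
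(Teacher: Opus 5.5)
Your argument is correct, and it is the standard one behind the cited results. The paper gives no proof of its own, only the citation to \cite{kleinberg2024language}, (2.2), and \cite{li2024generation}, Lemma 2.3. The core step is that a mistake at time $t$ forces $\bigcap_{L\in C_t}L = x_{1:t-1}$, a finite intersection of size $t-1$, so $t-1 \le \mathsf{Cdim}(\calL)$.

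Your off-by-one is real and not a weakness of your proof. With the generator moving first, the paper's own Example~\ref{ex:venn} forces a mistake at time $n+1$ while $\mathsf{Cdim}(\calL)=n$, and Lemma~\ref{lem:cm-last-mistake-time} carries the analogous $m(L_i)+1$. So measuring time by the number of observed elements, as you do in Step 4, is not cosmetic: it is necessary to get the stated constant.
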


We highlight line (1) of Algorithm~\ref{alg:uniform-generation-algorithm} as the key step necessary to obtain the last mistake time guarantee. Note that all other decisions made by the algorithm are completely arbitrary. This idea will be used to prove Theorem~\ref{thm:generating-finite-class}. In particular, we will give a mistake-bounded algorithm, and then show that it is consistent with the decisions of Algorithm~\ref{alg:uniform-generation-algorithm} and thus also guarantees a last-mistake-time of $\mathsf{Cdim}(\calL)$. 

We are now ready to prove Theorem~\ref{thm:generating-finite-class}.

\begin{proof}[Proof of Theorem~\ref{thm:generating-finite-class}]
    Denote $\calL = L_1, ..., L_k$, where $k = |\calL|$. We instantiate Algorithm~\ref{alg:non-uniform-mistake-bound} with $w_0(i)= 1$ for all $i \in [k]$, and $f(t) = k$ for all $t \geq 1$. Thus, note that $f^{-1}(i) = 0$ for all $i \in [k]$. 

    By Theorem~\ref{thm:general-recipe}, under this instantiation, the algorithm is guaranteed to make at most $\lfloor \log_2 k\rfloor$ mistakes on any adversarial stream consistent with some $L \in \calL$. 

    We now consider the time of last mistake bound. Due to the guarantees of Algorithm~\ref{alg:uniform-generation-algorithm}, it suffices to show that at any timestep $t$ such that $C = \{L \in \calL : x_{1:t-1} \subseteq L\}$ and $|\bigcap_{L \in C}L \setminus x_{1:t - 1}| > 0$, the $\hat{x}_t$ output by our mistake-bounded algorithm, Algorithm~\ref{alg:non-uniform-mistake-bound}, with the specified weight and growth functions, satisfies $\hat x_t \in \bigcap_{L \in C}L \setminus x_{1:t - 1}$. 

    We note that by definition of our algorithm, for any $L_i \not\in C$, we have $w_{t - 1}(i) = 0$, and for all $L_j \in C$, $w_{t - 1}(j) > 0$. 

    Thus, for any $x$ in the set $\bigcap_{L \in C}L \setminus x_{1:t - 1}$, the sum of the weights of its consistent languages achieves the maximum possible value, as it is the sum of all languages with non-zero weight, while any other $x$ outside of the intersection must have strictly less total weight, because it is not included in one of the languages with non-zero weight. 

    We conclude that whenever $\left|\bigcap_{L \in C}L \setminus x_{1:t - 1}\right| > 0$, the $\hat x_t$ output by Algorithm~\ref{alg:non-uniform-mistake-bound} defined as 
    \[\hat x_t = {\arg\max}_{x \in \calX\setminus x_{1:t-1}}\sum_{i = 1}^k w_{t - 1}(i)\mathbf{1}[x \in L_i]\]
    is guaranteed to satisfy $\hat x_t \in \bigcap_{L \in C}L \setminus x_{1:t - 1}.$

    This means that the outputs of Algorithm~\ref{alg:non-uniform-mistake-bound} are consistent with the decisions of Algorithm~\ref{alg:uniform-generation-algorithm}, and so we conclude that the algorithm not only provides a mistake bound, but also guarantees that the time of last mistake is at most $\mathsf{Cdim}(\calL)$. Because the number of mistakes can be at most the time of last mistake, this further strengthens the mistake bound to $\min\{\lfloor \log_2 k\rfloor, \mathsf{Cdim}(\calL)\}$. 
\end{proof}

\subsection{Lower Bounds}

We note that while our algorithm's last mistake bound has been shown to be tight for every class, the mistake bound is not necessarily optimal in a per-class sense. However, the following lower bound demonstrates that this is the best we can hope to achieve from a bound that depends solely on the cardinality of $\calL$. We prove this via a lower bound construction based on Littlestone trees (see Figure~\ref{fig:littlestone-tree}).

\begin{figure}[ht]
    \centering
    \begin{tikzpicture}[
        level distance=2.5cm,
        sibling distance=5cm,
        level 2/.style={sibling distance=2.5cm},
        level 3/.style={sibling distance=1.25cm},
        every node/.style={font=\small},
        consistent/.style={circle, draw=black, thick, minimum size=1cm, fill=white},
        leaf/.style={rectangle, draw=black, minimum size=0.6cm, fill=gray!10},
        erase underneath/.style={
            preaction={draw=white, line width=1.2pt, solid, shorten >=0.5pt, shorten <=0.5pt}
        },
        gen_guess/.style={->, red, dashed, thick, >=latex, erase underneath},
        adv_move/.style={->, blue, ultra thick, >=latex, erase underneath},
        mistake_label/.style={red, font=\footnotesize\bfseries, align=center},
        adv_label/.style={font=\footnotesize, align=left} 
    ]

    \node[consistent] (root) {$s=\{\}$}
        child { node[consistent] (n0) {$0$}
            child { node[consistent] (n00) {$00$}
                child { node[leaf] (L000) {$L_{000}$} }
                child { node[leaf] (L001) {$L_{001}$} }
            }
            child { node[consistent] (n01) {$01$}
                child { node[leaf] (L010) {$L_{010}$} }
                child { node[leaf] (L011) {$L_{011}$} }
            }
        }
        child { node[consistent] (n1) {$1$}
            child { node[consistent] (n10) {$10$}
                child { node[leaf] (L100) {$L_{100}$} }
                child { node[leaf] (L101) {$L_{101}$} }
            }
            child { node[consistent] (n11) {$11$}
                child { node[leaf] (L110) {$L_{110}$} }
                child { node[leaf] (L111) {$L_{111}$} }
            }
        };

    \draw[gen_guess] (root) -- node[left, xshift=-0.2cm] {$\hat{x}_1 \in L_{0\dots}$} (n0);
    \node[mistake_label, left=0.5cm of n0] {Mistake 1};
    
    \draw[adv_move] (root) -- node[adv_label, right, xshift=0.1cm] {Adv. plays $x_1 \in L_{1...}$} (n1);

    \draw[gen_guess] (n1) -- node[right, xshift=0.2cm] {$\hat{x}_2 \in L_{11\dots}$} (n11);
    \node[mistake_label, right=0.1cm of n11] {Mistake 2};
    
    \draw[adv_move] (n1) -- node[adv_label, left, xshift=-0.1cm] {Adv. plays $x_2 \in L_{10...}$} (n10);

    \draw[gen_guess] (n10) -- node[left] {$\hat{x}_3\in L_{100}$} (L100);
    \node[mistake_label, below=0.1cm of L100] {Mistake 3};
    \draw[adv_move] (n10) -- node[adv_label, right] {Adv. plays \\ $x_3 \in L_{101}$} (L101);

    \node[above=0.5cm of root, font=\bfseries] {Adversarial Strategy for Lemma \ref{lem:finite-class-lower-bound} ($n=8, m=3$)};

    \end{tikzpicture}
    \caption{Visualizing the proof of Lemma~\ref{lem:finite-class-lower-bound}. The tree represents the hierarchy of languages in $\mathcal{L}$. At each internal node $s$, the generator is forced to output a prediction consistent with exactly one branch, corresponding to halving the number of consistent languages (red dashed arrow). The adversary observes this and outputs an element that forces the true language to lie in the \emph{opposite} branch (blue solid arrow), guaranteeing a mistake. This process repeats for $m = \lfloor \log_2 |\mathcal{L}|\rfloor$ steps.}
    \label{fig:littlestone-tree}
\end{figure}

\begin{restatable}[Lower Bound]{lem}{finiteLB}\label{lem:finite-class-lower-bound}
    For any $N \in \mathbb{N}$, there exists a finite language class $\calL$ of size $N$ such that no generation algorithm can guarantee fewer than $\lfloor \log_2 N \rfloor$ mistakes on every adversarial stream consistent with an $L \in \calL$.
\end{restatable}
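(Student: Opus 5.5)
Let $m = \lfloor \log_2 N \rfloor$. The plan is to build a class whose languages sit at the leaves of a complete binary tree of depth $m$, as in Figure~\ref{fig:littlestone-tree}, and then have the adversary steer away from whatever branch the generator commits to. If $N > 2^m$, we add $N - 2^m$ extra languages that are pairwise disjoint from everything else; they never affect the argument because the adversary never targets them. This gives a class of size exactly $N$.

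\textbf{Construction.} Partition the countable universe $\calX$ into countably infinite disjoint blocks $B_s$, one for each binary string $s$ with $1 \le |s| \le m$, plus a leaf block $B'_s$ for each $s \in \{0,1\}^m$. For each $s \in \{0,1\}^m$, define
\[
L_s = B'_s \cup \bigcup_{k=1}^{m} B_{s_{1:k}}.
\]
So $L_s$ contains the blocks along its root-to-leaf path and is infinite. The key property is that an element of $B_u$ lies in $L_s$ exactly when $u$ is a prefix of $s$. Elements of the padding languages lie in no $L_s$.

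\textbf{Adversary.} We maintain the invariant that after round $t-1$ (for $t-1 \le m$) the adversary has committed to a prefix $u$ of length $t-1$, and every element shown so far lies in every $L_s$ with $s$ extending $u$. At round $t \le m$, the generator outputs some $\hat x_t$. At most one of the two children $u0, u1$ can have the property that every leaf below it contains $\hat x_t$. To see this, note that if $\hat x_t$ lay in both $L_{u0\cdots}$ and $L_{u1\cdots}$ for all extensions, it would have to lie in a block $B_v$ with $v$ a prefix of $u$, or in no block shared by the two sides. Pick a child $b$ such that $\hat x_t \notin L_{s^*}$ for some leaf $s^*$ extending $ub$.

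To make this uniform, the adversary chooses $b$ so that $\hat x_t \notin B_{ub}$ and no deeper block under $ub$ contains $\hat x_t$ either. If $\hat x_t$ is in the subtree of $u0$, choose $b = 1$; otherwise choose $b = 0$. The adversary then plays a fresh element of $B_{ub}$, which is always available since the block is infinite. This keeps the invariant, since $B_{ub}$ is contained in every leaf language below $ub$.

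\textbf{Counting mistakes, and the main subtlety.} A mistake is charged only relative to the final target, so the target must be committed at the end, and $\hat x_t$ must avoid that target for every $t \le m$. Since $\hat x_t$ lies in no block of the subtree of $ub$, it can belong to a leaf $L_s$ below $ub$ only if it lies in some $B_v$ with $v$ a prefix of $u$. That case is exactly the harmless one, where the generator guessed inside the shared intersection.

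This is the main obstacle. In the tree as built, the generator can avoid mistakes by replaying fresh ancestor-block elements, because those blocks are infinite. The fix is to make each internal block $B_u$ for $|u| \ge 1$ finite, of size exactly one, namely the single point the adversary will reveal. Then after round $t-1$ every shared element on the path has already been shown, so any unseen $\hat x_t$ lies outside the intersection of all live languages. The adversary then chooses the side $b$ that $\hat x_t$ does not touch, and every leaf below $ub$ excludes $\hat x_t$. The leaf blocks $B'_s$ remain infinite so that each language is infinite.

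After $m$ rounds the adversary fixes some leaf $s$ extending the final prefix and enumerates the rest of $L_s$. Each of the first $m$ outputs lies outside $L_s$, which gives $m = \lfloor \log_2 N\rfloor$ mistakes against every generator.
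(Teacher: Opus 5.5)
Your proposal is correct and, with your fix of making each internal-node block a single point, it is essentially the paper's construction: the paper's set $B=\{x_p : 1\le |p|\le m\}$ with $x_p\in L_v$ iff $p$ is a prefix of $v$, infinite disjoint padding for each leaf language, and an adversary that reveals the point of whichever child subtree $\hat x_t$ avoids and, after $m$ rounds, enumerates the rest of the surviving leaf language. You correctly saw that infinite internal blocks would let the generator replay shared elements. That is exactly why the paper uses singletons: after round $t-1$, the intersection of the surviving languages then contains no unseen element.
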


\begin{proof}[Proof of Lemma~\ref{lem:finite-class-lower-bound}]
Our goal is to show that for any $n > 0$, there exists a language class $\calL$ of size $n$ such that every generator will make at least $\log_2 n$ mistakes against an adversarial stream from a language $L \in \calL$. 

Let $m \in \Z_{\geq 0}$ be the largest $m$ such that $2^m \leq n$. Note that by definition, $m = \lfloor \log_2 n \rfloor$. 

We construct a collection of $2^m$ languages $\calL_m = \{L_v\}_{v \in \{0, 1\}^m}$ where each language is identified with a bit string $v \in \{0, 1\}^m$. If $2^m < n$, we additionally pad the language collection with dummy languages $L_{2^{m} + 1}, ..., L_{n}$ that are disjoint from all the languages in $\calL_m$ to ensure that the constructed collection has size exactly $n$. These will not be used in the construction of the hard adversarial stream. 

We now define the contents of $\calL_m$ as follows. Fix a subset of $\calX$, $B$, of size $\sum_{i = 1}^m 2^i = 2^{m + 1} - 2$. We label each point with a bitstring of length between 1 and $m$: $B := \bigcup_{i = 1}^m \{ x_p : p \in \{0,1\}^i \}$.

A point $x_p \in B$ lies in $L_v \in \calL_m$ if and only if $p$ is a prefix of $v$. 

In order to ensure each language has infinite support, we pad each $L_v$ with an infinite subset of $\calX$, disjoint from all other languages. 

We now describe how an adversary can construct an adversarial stream that forces at least $m$ mistakes by the generator. 

The adversary keeps track of a bitstring $s = \{\}$, which initially begins as empty. 

The adversary loops the following process for each time $t = 1, ....$:
\begin{enumerate}
    \item Observe the generator's output $\hat x_t$. 
    \item If $t = m + 1$, BREAK from the loop, and enumerate all remaining elements of $L_s$. 
    \item If the generator outputs $x_v \in B$ or $x \in L_v \setminus B$ such that $s$ is a prefix of $v$, observe the bit immediately proceeding $s$: $v[|s|]$ (0-indexed), and append the \emph{complement} of that bit to $s$. Otherwise, append an arbitrary bit to $s$. 
    \item Output $x_s \in B$. 
\end{enumerate}

It remains to show that the adversary's stream is valid, i.e. it consists of all unique elements and is consistent with a language in $\calL_m$, and that the generator makes $m$ mistakes. 

\paragraph{The adversary's stream is valid} First, we consider the final bitstring $s$ of length $m$ that the adversary constructs before hitting the break at time $m + 1$. At this point, $x_s \in B$ is consistent with exactly one language, $L_v$ where $v= s$, as it is the only string of length $m$ that contains $s$ as a prefix. Moreover, we are guaranteed that all $x_t$ for $t \leq m$ satisfied $x_t \in L_v$, because the bitstring $s$ at that time was a prefix of the final $s$ by construction, and all were unique, because $s$ was a different length at each step. By definition, all of the adversary's outputs after time $t = m$ are also unique and consistent. Therefore, we conclude that the adversarial stream is valid, and moreover is actually a valid \emph{enumeration} of $L_v$. 

\paragraph{The generator makes $m$ mistakes. } We can show that every $\hat x_t$ for $t \leq m$ is a mistake. In particular, at the beginning of timestep $t$, the set of potential languages that are consistent with the adversarial stream so far are exactly all of the languages $L_v$ where $s$ is a prefix of $v$. By construction, there are $2^{m - t + 1}$ such languages. The adversary's $x_t$ is defined to be $x_{s||0}$ or $x_{s||1}$, where seeing either of these points reduces the set of consistent languages by half to $2^{m - t}$, and moreover these two consistent sets are totally disjoint. 

If in the check in step 3, the generator outputs $x_v \in B$ or $x \in L_v \setminus B$ such that $s$ is a prefix of $v$, this means that its output element can only be consistent with languages in one of these two disjoint sets. By playing $x_{s||\neg v[|s|]}$, the adversary reveals the true language to be in the other set of consistent languages, and thus the generator has made a mistake. If the generator's point does not satisfy the first condition, it means the generated element is not consistent with any of the $2^{m - t - 1}$ languages still consistent with the adversary's stream, and thus is also automatically a mistake regardless of what the adversary outputs. 

Thus, we conclude that in every step from $t = 1$ to $t = m$, the generator's move does not lie in the target language $L_v$, and thus the generator makes $m = \lfloor \log_2 n\rfloor$ mistakes.     
\end{proof}

\section{Mistake-Bounded Non-Uniform Generation}\label{sec:infinite-classes}
We now turn to the general setting of countably infinite language classes $\calL = \{L_1, L_2, \dots\}$. Here, we cannot hope for a uniform bound on mistakes across the entire class. Instead, we seek a \emph{non-uniform} guarantee that depends on the index $i$ of the target language $L_i$. Existing work by \cite{li2024generation} and \cite{charikar2024exploring} focuses on minimizing the \emph{time of last mistake}. For example, the Greedy algorithm of \cite{charikar2024exploring} guarantees convergence in terms of the \emph{non-uniform complexity of $L_i$}, $m(L_i)$, a structural parameter representing the complexity of $L_i$'s intersections with preceding languages. 

\begin{definition}[Non-Uniform Complexity]
    Given an infinite stream of languages $\calL = L_1, L_2, ...$, the \emph{non-uniform complexity} of $L_i$ is defined as size of the largest finite subcollection of $(L_1, ..., L_i)$ that contains $L_i$:
    \[m(L_i) := \max_{\calL' \subseteq \{L_1, ..., L_{i - 1}\}}\{|\bigcap_{L \in \calL'} L \cap L_i| : |\bigcap_{L \in \calL'} L \cap L_i| < \infty\}.\]
\end{definition}

In this section, we present two contrasting results. First, we show that if one prioritizes minimizing mistakes, we can achieve a mistake bound of $O(\log_2 i)$, i.e. with only logarithmic dependence on the index. However, we then prove that this comes at a cost: no algorithm can simultaneously achieve this logarithmic mistake bound and the convergence time guaranteed by \cite{charikar2024exploring}. 

\subsection{Minimizing Mistakes without Convergence}
To minimize the total number of mistakes, we revisit our General Recipe (Algorithm~\ref{alg:non-uniform-mistake-bound}). In the finite case, we used a uniform prior. For infinite streams, we must adopt a prior that decays sufficiently fast to ensure finite capacity, yet slowly enough to not penalize later languages too heavily. We instantiate Algorithm~\ref{alg:non-uniform-mistake-bound} with polynomially decaying weights $w_0(i) = \frac{1}{i^2}$ (noting that $\sum \frac{1}{i^2} < \infty$). and a growth function that doubles the number of languages considered at each step, $f(t) = 2^t$. 

Under this instantiation, our algorithm yields the following guarantee:

\begin{restatable}{thm}{NUlogmistakes}\label{thm:non-uniform-mistake-bound} 
    Let $\calL = \{L_1, L_2, ...\}$ be a countably infinite class of languages under a fixed enumeration. For any language $L_i \in \calL$, the instantiation of Algorithm~\ref{alg:non-uniform-mistake-bound} described above guarantees a non-uniform mistake bound of: \[ M_{\mathrm{non\text{-}unif}}(\calG, i) \leq 3\log_2 i + \log_2(\pi^2/6) = O(\log_2 i). \] 
\end{restatable}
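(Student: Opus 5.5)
This is a direct instantiation of Theorem~\ref{thm:general-recipe} with $w_0(i) = 1/i^2$ and $f(t) = 2^t$. That theorem gives
\[
M_{\mathrm{non\text{-}unif}}(\calG, i) \leq f^{-1}(i) + \lfloor \log_2(W/w_0(i))\rfloor .
\]
So the plan is to compute the two terms separately and add them.

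\textbf{The weight term.} Since $w_0(i) = 1/i^2$, we may take $W = \sum_{i \geq 1} 1/i^2 = \pi^2/6$, which is finite as required. Then
\[
\lfloor \log_2(W/w_0(i)) \rfloor \leq \log_2\bigl(i^2 \pi^2/6\bigr) = 2\log_2 i + \log_2(\pi^2/6).
\]

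\textbf{The growth term.} The function $f(t) = 2^t$ is non-decreasing and unbounded, so some $t'$ has $f(t') \geq i$, and hence $f^{-1}(i)$ is finite. Following the convention in Theorem~\ref{thm:general-recipe}, $f^{-1}(i)$ is the largest $t$ with $2^t < i$, or $0$ if no such $t$ exists (for instance when $i \leq 2$, since $f(1) = 2$). In either case $f^{-1}(i) < \log_2 i$ whenever $i > 1$, and $f^{-1}(1) = 0 = \log_2 1$. So $f^{-1}(i) \leq \log_2 i$ for all $i$.

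\textbf{Conclusion.} Adding the two bounds gives
\[
M_{\mathrm{non\text{-}unif}}(\calG, i) \leq 3\log_2 i + \log_2(\pi^2/6) = O(\log_2 i).
\]

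The only step needing care is applying the paper's somewhat unusual definition of $f^{-1}$ correctly. In particular, one must check the edge cases $i = 1, 2$, where no $t \in \N$ satisfies $2^t < i$ and the convention sets $f^{-1}(i) = 0$. One should also confirm that indexing $t$ from $1$ does not introduce an off-by-one that would break the bound $f^{-1}(i) \leq \log_2 i$. Everything else is arithmetic inherited from Theorem~\ref{thm:general-recipe}.
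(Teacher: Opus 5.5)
Your proposal is correct and follows the paper's proof: both instantiate Theorem~\ref{thm:general-recipe} with $W = \pi^2/6$, bound the weight term by $2\log_2 i + \log_2(\pi^2/6)$, bound $f^{-1}(i)$ by $\log_2 i$, and add the two. Your explicit check of the edge cases $i \le 2$ under the paper's convention for $f^{-1}$ is slightly more careful than the paper's one-line bound, but it does not change the argument.
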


The proof follows immediately from an appropriate instantiation of Theorem~\ref{thm:general-recipe}. 

\begin{proof}[Proof of Theorem~\ref{thm:non-uniform-mistake-bound}]
    To prove the theorem, we instantiate Theorem~\ref{thm:general-recipe} with the weight function $w_0(i) = 1/i^2$ and growth function $f(t) = 2^t$. By standard results due to \cite{euler1740summis}, we have that 
    $\sum_{i = 1}^\infty 1/i^2 = \pi^2/6$, and thus can set $W = \pi^2/6$. 

    Under this definition of $f$, $f^{-1}(i)$, i.e. the largest $t$ such that $f(t) < i$, is upper bounded by $\log_2(i)$. 

    Thus, for this particular instantiation of weight functions and growth function, we can guarantee that Algorithm~\ref{alg:non-uniform-mistake-bound} guarantees a non-uniform mistake bound of
    \begin{align*}
        m(\mathcal{G}, L_i) &\leq f^{-1}(i) + \lfloor \log_2 W/w_0(i) \rfloor \\
        &\leq \lfloor \log_2 i \rfloor +  \log_2 \pi^2i^2/6 \\
        &= \log_2 \pi^2/6 + 3\log_2 i\\
        &= O(\log_2 i).
    \end{align*}
\end{proof}

\subsection{Simultaneously Bounding Mistakes and Convergence}
While the strategy above guarantees low error, it does not prioritize \emph{when} those errors occur. In particular, by prioritizing bounding mistakes on languages late in the stream, it may be slow to converge on early languages.

We next present an algorithm that can non-uniformly generate from $\calL$ while simultaneously bounding the total number of mistakes. However, in order to balance these two goals, we must weaken our mistake bound to only guaranteeing a mistake bound that is linear in the index of the target language. To obtain this guarantee we analyze a modification of the greedy non-uniform generation algorithm by \cite{charikar2024exploring}, termed \textsc{Modified-Greedy}. This algorithm prioritizes maintaining consistency with the earliest consistent languages. 

\begin{restatable}{lem}{CMlinearmistakes}\label{lem:cm-mistake} 
    There exists a non-uniform generation algorithm (\textsc{Modified-Greedy}) with a time-of-last-mistake guarantee of $\max\{i-1, m(L_i) + 1\}$ and a non-uniform mistake bound of: \[ M_{\mathrm{non\text{-}unif}}(\calG, i) \leq \min\{2(i-1), \max\{i-1, m(L_i)+1\}\}. \] 
\end{restatable}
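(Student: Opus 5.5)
We take \textsc{Modified-Greedy} to be the following variant of the greedy algorithm of \cite{charikar2024exploring}. At time $t$, let $j_1 < j_2 < \dots < j_r$ be the indices in $\{1,\dots,t\}$ of the languages consistent with $x_{1:t-1}$. Let $k$ be the largest index such that $\bigcap_{\ell \le k} L_{j_\ell} \setminus x_{1:t-1} \neq \emptyset$, and output any element of this set. Such a $k\ge 1$ exists whenever $r \ge 1$, since a single language is infinite. If $r = 0$, output arbitrarily. Call $P_t = \{L_{j_1},\dots,L_{j_k}\}$ the \emph{chosen prefix}.

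\textbf{Step 1: time of last mistake.} Suppose the target is $L_i$ and a mistake occurs at time $t \ge i$. Then $L_i$ is among the considered languages and is consistent, say $L_i = L_{j_q}$. Since $\hat x_t \notin L_i$, we cannot have $L_i \in P_t$, because otherwise $\hat x_t \in L_i$. Hence $q > k$, and by maximality of $k$ the set $I := \bigcap_{\ell \le q} L_{j_\ell} \setminus x_{1:t-1}$ is empty. Now $\bigcap_{\ell<q} L_{j_\ell} \cap L_i$ is a subcollection of $L_1,\dots,L_{i-1}$ intersected with $L_i$. It contains all of $x_{1:t-1}$ and, since $I$ is empty, contains nothing else. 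So this intersection is finite and has size exactly $t-1$, and the definition of $m(L_i)$ gives $t-1 \le m(L_i)$. Therefore every mistake occurs at a time $t \le \max\{i-1, m(L_i)+1\}$. This also shows non-uniform generation, and bounds the number of mistakes by the same quantity.

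\textbf{Step 2: the $2(i-1)$ bound via charging.} At most $i-1$ mistakes can occur at times $t \le i-1$, so it remains to show that at most $i-1$ mistakes occur at times $t \ge i$. We use the configuration from Step 1, where $I = \emptyset$ at a mistake time $t \ge i$. The adversary's next element $x_t$ lies in $L_i$ and is unseen, so $x_t \notin \bigcap_{\ell \le q} L_{j_\ell}$, since that intersection is contained in $x_{1:t-1}$. Because $x_t \in L_i = L_{j_q}$, some $L_{j_\ell}$ with $\ell < q$ (hence index $< i$) fails to contain $x_t$, and that language becomes inconsistent from then on. Each mistake at a time $t \ge i$ therefore permanently eliminates a distinct language among $L_1,\dots,L_{i-1}$. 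Hence there are at most $i-1$ such mistakes, for a total of at most $2(i-1)$. Combining this with Step 1, and using that the number of mistakes is at most the time of last mistake, gives the stated minimum.

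The main obstacle is the charging argument in Step 2. We must make sure that the eliminated language is one that was still consistent, so that each elimination is counted only once. We must also make sure it lies strictly before $L_i$ in the enumeration. Both facts follow from the uniqueness of the adversary's outputs and the fact that the intersection up to $L_i$ has been completely exhausted by $x_{1:t-1}$.
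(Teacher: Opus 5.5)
Your proof is correct and follows the paper's argument. At any mistake time $t \ge i$, the intersection of $L_i$ with a subcollection of still-consistent languages of index below $i$ is exactly $x_{1:t-1}$, which gives $t-1 \le m(L_i)$; the fresh element $x_t \in L_i$ must then eliminate one of those languages, so at most $i-1$ mistakes occur after time $i-1$. The one difference is the algorithm itself: your variant stops at the first consistent language that would empty the unseen intersection, whereas the paper's \textsc{Modified-Greedy} (Algorithm~\ref{alg:greedy-non-uniform-gen-alg}) skips such a language and keeps intersecting with later ones, but the same two-step analysis applies verbatim to either rule.
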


Note that the mistake bound here is linear in $i$ ($2(i-1)$), which is exponentially worse than the $O(\log i)$ bound in Theorem~\ref{thm:non-uniform-mistake-bound}. 

We refer to the algorithm we will use to obtain our guarantees as \textsc{Modified-Greedy} (Algorithm~\ref{alg:greedy-non-uniform-gen-alg}).  The algorithm as we describe it differs from the original definition provided by \cite{charikar2024exploring}, with the change highlighted in red. However, while the modification means that the actions taken by the two algorithms will be different, the modified algorithm still satisfies the last-mistake-bound promised by \cite{charikar2024exploring}. For completeness, we re-prove the last-mistake-time bound for the modified algorithm in Lemma~\ref{lem:cm-last-mistake-time}. 

\begin{algorithm}[H] 
    \DontPrintSemicolon
    \SetAlgoNoLine
    
    \caption{\textsc{Modified-Greedy} (c.f. Algorithm in Theorem 6 of \cite{charikar2024exploring})}
    \label{alg:greedy-non-uniform-gen-alg}
    
    \KwIn{Universe $\calX$, infinite stream $\mathcal{L} = L_1, L_2, ...$}
    
    \For{$t = 1, \dots, \infty$}{
        Initialize $I_t = \color{red}{\calX \setminus x_{1:t-1}}$ .\;\\
        \For{$i = 1, ..., t$}{
            \If{$x_{1:t-1} \subseteq L_i$ and \color{red}{$|L_i \cap I_t| > 0$} }{$I_t \leftarrow I_t \cap L_i$}
        }
        Output $\hat{x}_t \in I_t$.
    }
\end{algorithm}

\begin{restatable}{lem}{cmLastMistake}\label{lem:cm-last-mistake-time}
    For any infinite stream $\calL = L_1, L_2, ...$, \textsc{Modified-Greedy}(Algorithm~\ref{alg:greedy-non-uniform-gen-alg}) guarantees a last mistake time $t(L_i) \leq \max\{i - 1, m(L_i) + 1\}$ for any adversarial stream consistent with $L_i \in \calL$. 
\end{restatable}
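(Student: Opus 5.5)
We will show that for every $t > \max\{i-1, m(L_i)+1\}$, the output $\hat x_t$ of \textsc{Modified-Greedy} lies in $L_i$. Fix a target $L_i$ and an adversarial stream $x_1, x_2, \dots$ of distinct elements of $L_i$.

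First, when $t \geq i$, index $i$ is included in the inner loop, and $L_i$ is consistent with $x_{1:t-1}$. So $L_i$ is intersected into $I_t$ whenever $|L_i \cap I_t| > 0$ at the moment index $i$ is processed. Since $I_t$ only shrinks and ends up nonempty, this yields $\hat x_t \in L_i$. It therefore suffices to show that, at step $i$ of the loop, the current set $I_t$ intersects $L_i$.

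To see this, look at the state of $I_t$ just before index $i$ is processed. By construction, it equals
\[
\Bigl(\bigcap_{L \in \calL'} L\Bigr) \setminus x_{1:t-1}
\]
for the subcollection $\calL' \subseteq \{L_1, \dots, L_{i-1}\}$ of consistent languages that were actually intersected. (If no language was intersected, then $I_t = \calX \setminus x_{1:t-1}$, which meets $L_i$ because $L_i$ is infinite.) Every language in $\calL'$ contains $x_{1:t-1}$, and so does $L_i$. Hence $S := \bigcap_{L\in\calL'} L \cap L_i$ contains the $t-1$ distinct elements $x_{1:t-1}$. There are two cases:
\begin{itemize}
\item If $S$ is infinite, then $S \setminus x_{1:t-1}$ is nonempty.
\item If $S$ is finite, then by the definition of $m(L_i)$ we have $|S| \leq m(L_i)$. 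Since $S \supseteq x_{1:t-1}$, this gives $t-1 \leq m(L_i)$, contradicting $t \geq m(L_i)+2$.
\end{itemize}
In either case $I_t \cap L_i = S \setminus x_{1:t-1} \neq \emptyset$ at step $i$. So $L_i$ is intersected in, and every later intersection keeps $I_t$ a nonempty subset of $L_i$. Thus $\hat x_t \in L_i$ for all $t \geq \max\{i, m(L_i)+2\}$, which means the last mistake occurs no later than $\max\{i-1, m(L_i)+1\}$.

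The main obstacle is justifying that the intermediate set $I_t$ really has the form $\bigcap_{L \in \calL'} L \setminus x_{1:t-1}$ with $\calL'$ a subfamily of the earlier consistent languages. This holds because the red modification starts from $\calX \setminus x_{1:t-1}$ and only intersects with consistent $L_j$. It also requires care with the off-by-one indexing between "time" and "number of adversary elements seen," given the generator-first order of play. This is exactly where the bound $t - 1 \leq m(L_i)$ comes from.
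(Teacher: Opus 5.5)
Your proof is correct and follows the paper's argument. For $t \geq \max\{i, m(L_i)+2\}$, the set $I_t$ at the moment $L_i$ is processed equals $\bigcap_{j \in J} L_j \setminus x_{1:t-1}$ for consistent earlier languages, and an empty intersection with $L_i$ would force $t-1 \leq m(L_i)$. Your finite/infinite case split, along with the empty-$\calL'$ case, spells out a step the paper leaves implicit: $I_t \cap L_i = \emptyset$ would make $\bigcap_{j\in J} L_j \cap L_i$ a finite set containing the $t-1$ distinct points $x_{1:t-1}$.
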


\begin{proof}
    Consider any $t \geq \max\{i, m(L_i) + 2\}$. This means that during step $t$, \textsc{Modified-Greedy} considers $L_i$ as one of the languages to possibly intersect with $I_t$. We are also guaranteed that $x_{1:t-1} \subseteq L_i$ because the adversary is consistent with $L_i$.

    Consider the point at which $L_i$ is considered by the algorithm during step $t$. By definition, there is some $J \subseteq [i - 1]$ such that $x_{1:t-1} \subseteq L_j$ for each $j \in J$, and $I_t = \bigcap_{j \in J}L_j \setminus x_{1:t- 1}$. Thus, $x_{1:t-1} \subseteq \bigcap_{j \in J} L_j \cap L_i$. 

    If $|I_t \cap L_i| > 0$, the algorithm is guaranteed to play a $\hat{x}_t \in L_i$, and thus does not make a mistake. We claim that it cannot be the case that $|I_t \cap L_i| = 0$. This is because this would imply that 
    \[t-1 \leq \bigcap_{j \in J} L_j \cap L_i \leq m(L_i),\]
    but this is a contradiction, because we assumed that $t \geq m(L_i) + 2$.

    Thus, we conclude that \textsc{Modified-Greedy} will not make a mistake for all \[t \geq \max\{i, m(L_i) + 2\},\] and thus the last mistake time is at most $\max\{i - 1, m(L_i) + 1\}$. 
\end{proof}

We are now ready to prove Lemma~\ref{lem:cm-mistake}, which further bounds the number of mistakes made by \textsc{Modified-Greedy} to show that it satisfies both a time-of-last-mistake bound and a total mistake bound. 

\begin{proof}[Proof of Lemma~\ref{lem:cm-mistake}]
    Our proof proceeds by showing that every mistake incurred by \textsc{Modified-Greedy} on a language $L_i$ is accompanied by eliminating some language in the prefix $L_1, ..., L_{i - 1}$. Because there are only $i - 1$ such languages, the number of mistakes on $L_i$ is bounded by $2(i - 1)$, where the double factor comes because we don't consider $L_i$ until step $i$, and so could make mistakes during the $i - 1$ steps before that.  
    
    At any step $t \geq i$, we have two cases. Either $\hat x_t \in L_i$, in which we can't make a mistake, or $\hat {x}_t \not \in L_i$, in which case the algorithm makes a mistake when the true language is $L_i$. 

    if $\hat x_t \not\in L_i$, this means that at the point in the algorithm during step $t$ when $I_t \cap L_i$ was considered, we had $|I_t \cap L_i| = 0$. Note that by definition, $I_t = \bigcap_{j \in J} L_j \setminus x_{1:t-1}$ for some $J \subseteq [i - 1]$ satisfying $x_{1:t - 1} \subseteq L_j$ for each $j \in J$. 

    Thus, because $x_t \in L_i$, we have $x_t \not\in I_t$, which implies that there exists some $j < i$ such that $x_{1:t-1} \subseteq L_j$, but $x_t \not\in L_j$. This implies that $L_j$ becomes inconsistent at step $t$, and is no longer considered after $t$. 

    Because there are only $i - 1$ languages before $i$, this means that the algorithm can make at most $i - 1$ mistakes after starting to consider $L_i$, and thus can make at most $2(i - 1)$ mistakes. 

    Finally, the last-mistake-time of $\max\{i - 1, m(L_i) + 1\}$ follows from Lemma~\ref{lem:cm-last-mistake-time}, and because the total number of mistakes is upper-bounded by the last-mistake time, we can refine the total number of mistakes to $\min\{ 2(i - 1), \max\{i - 1, m(L_i) + 1\}\}$. 
\end{proof}

\subsection{Tradeoff Between Last-Mistake-Time and Total Mistakes}
This raises a natural question: is there a generation strategy that achieves our optimal mistake bound of $O(\log_2 i)$ while also preserving \cite{charikar2024exploring}'s last-mistake-time guarantee? 

We answer in the negative. We prove a fundamental barrier: to achieve the optimal convergence time, an algorithm must be willing to make mistakes on later languages in order to prioritize converging on languages appearing earlier in the stream. Conversely, to achieve logarithmic mistakes, an algorithm must start to prioritize later languages over early languages when they start to accumulate many mistakes, potentially delaying convergence time. 

\begin{restatable}{thm}{NUtradeoff}\label{thm:non-uniform-tradeoff}
    There exists an infinite language class equipped with a fixed enumeration $\calL = \{L_1, L_2, ...\}$ such that any algorithm that guarantees a last-mistake time of $t(L_i) \leq O(\max\{i, m(L_i)\})$ on all $i \in\N$ must suffer a number of mistakes that grows linearly with the language index. Specifically, for every $i \geq 2$, 
    \[M_{\mathrm{non\text{-}unif}}(\calG, i) \geq i - 1.\] 
\end{restatable}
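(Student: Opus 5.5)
The plan is to build a chain class in which, at every step, the convergence guarantee for an \emph{earlier} language forces the generator to output from that language's private part, which lies outside the true target. Fix distinct points $b_1, b_2, \dots \in \calX$ and pairwise disjoint infinite sets $A_1, A_2, \dots \subseteq \calX$, disjoint from the $b$'s. Define
\[L_j := A_j \cup \{b_1, \dots, b_j\}.\]
For $j<k$ we have $L_j \cap L_k = \{b_1,\dots,b_j\}$, and any intersection of several languages is the prefix block of the smallest index. Hence every finite intersection containing $L_j$ together with earlier languages has size at most $j-1$, so $m(L_j) \le j-1$. The assumed guarantee then says that $t(L_j) \le c\, j$ for some constant $c$, i.e.\ linear in $j$ alone.

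Next I would define the adversary for a target $L_i$. For $t = 1, \dots, i-1$ it plays $x_t = b_t$, and afterwards it enumerates the rest of $L_i$. This stream is a valid enumeration of $L_i$. The key observation is that after the prefix $b_1,\dots,b_{s}$ the consistent languages are exactly $L_s, L_{s+1}, \dots$. In particular, the prefix is also a prefix of a legal enumeration of $L_s$: continue with the points of $A_s$.

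The key step uses the convergence guarantee on the \emph{alternative} target $L_{t-1}$. Suppose at step $t$ the generator has seen $b_1,\dots,b_{t-1}$ and $t$ exceeds the guaranteed last-mistake time of $L_{t-1}$. Then its output must lie in $L_{t-1} \setminus \{b_1,\dots,b_{t-1}\} = A_{t-1}$, since otherwise the continuation enumerating $L_{t-1}$ gives a stream realized by $L_{t-1}$ with a mistake after its guaranteed time. But $A_{t-1} \cap L_i = \emptyset$ for $t-1 < i$, so every such step is a mistake against $L_i$. For the exact form $\max\{j-1, m(L_j)+1\}$ proved for \textsc{Modified-Greedy} in Lemma~\ref{lem:cm-last-mistake-time}, the requirement at step $t$ with $j=t-1$ is $t \ge \max\{t-1, m(L_{t-1})+2\} = t$, which holds. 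This yields forced mistakes at $t = 2, \dots, i$, giving $i-1$ mistakes.

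The main obstacle is the $O(\cdot)$ constant: if the guarantee is only $t(L_j) \le c\max\{j, m(L_j)\}$ with $c>1$, the adversary must stall on a prefix consistent with $L_j$ for about $c\,j$ steps before the generator is forced into $A_j$. During that stall the generator may harmlessly output points common to later languages. I would first try to stretch the construction. Replace each $b_k$ by a block and each $L_j$ by a family of sibling languages branching at each level, Littlestone-style as in Lemma~\ref{lem:finite-class-lower-bound}. Stalling would then happen on points shared by the siblings, while generator outputs outside the branch the adversary later selects would count as mistakes. The indexing would be arranged so that $m$ and the index of each sibling stay linear in the level. If this becomes delicate, I would fall back on proving the theorem for the exact guarantee of Lemma~\ref{lem:cm-last-mistake-time}, as above.
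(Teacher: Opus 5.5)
\textbf{The gap: no slack against the constant in $O(\cdot)$.} Your outline matches the paper's. You use nested shared prefixes with private tails, and the adversary enumerates the prefixes in order. At each boundary, the guarantee for the earliest consistent language $L_k$ forces an output in $L_k$'s private part, which misses every later language. The difference is that every shared block in your construction has size one, and that is exactly where the argument fails. After $b_1,\dots,b_s$, no unseen point of $L_s\cap L_{s+1}$ remains, so the decision point for $L_s$ falls at time $s+1=m(L_s)+2$. A guarantee of the form $c\max\{s,m(L_s)\}=cs$ with $c\ge 2$ never binds there.

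Concretely, your class admits the following generator. While only $b$'s have been seen, let $s$ be the largest index seen; output an unseen $b_k$ with $k<s$ if one exists, and otherwise output $b_{s+1}$. Once an element of some $A_k$ appears, output from $A_k$. This generator errs against $L_i$ at most once, at time $i+1\le 2\max\{i,m(L_i)\}$. So your main argument proves the $i-1$ bound only for the exact guarantee $\max\{j-1,m(L_j)+1\}$ of Lemma~\ref{lem:cm-last-mistake-time}, which is your fallback, not the stated theorem.

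\textbf{The missing idea: geometrically growing shared blocks.} The blocks must grow so that the adversary can stall on points common to all of $L_k,L_{k+1},\dots$. Your stated aim of keeping $m$ linear in the level points the wrong way, and the Littlestone-style sibling families are unnecessary. Replace $b_k$ by a block $B_k$ with $|B_k|=n^k$, and set $L_j=A_j\cup B_1\cup\dots\cup B_j$. This is the paper's class on $\N\times\N$, with column $k$ equal to $B_k\cup A_k$.

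Then $m(L_k)=\sum_{j<k}n^j$, while every point of $B_k$ lies in all of $L_k,L_{k+1},\dots$. The decision point after $B_k$ therefore comes at time $1+\sum_{j\le k}n^j$. This exceeds both $n$ and $n\,m(L_k)$, and hence exceeds $c\max\{k,m(L_k)\}$ once $n\ge\max\{c,2\}$. At that point the unseen part of $L_k$ is exactly $A_k$. The rest of your argument then applies verbatim, forcing one mistake on $L_i$ at each of the boundaries $k=1,\dots,i-1$. Note that $n$ must be chosen in terms of the hidden constant, which the paper does implicitly by fixing $n$ arbitrarily large.
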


The key intuition for the proof is as follows: we construct a sequence of languages sharing nested, rapidly growing finite prefixes. By sequentially enumerating these shared elements, the adversary forces the generator into a recurring dilemma at each prefix boundary. At the $k$th boundary, the unseen points available to the generator are either points solely in the language $L_{k}$, or a point shared by all proceeding languages $L_j$ for $j \geq k$, but outside of $L_{k}$. If the generator predicts a point outside of $L_{k}$, the adversary halts and declares that language the target. This forces a late mistake that violates the time-of-last-mistake bound for $L_{k}$. On the other hand, if the generator safely plays from $L_k$ at each boundary, it means that by the time it gets to the $i$th boundary, it has incurred a mistake on $L_i$ at every boundary leading up to $i$, resulting in a linear number mistakes if the adversary halts and declares $L_i$ the target. 

\begin{proof}[Proof of Theorem~\ref{thm:non-uniform-tradeoff}]
    We construct our hard language stream on the countably infinite $\calX = \N \times \N$, and thus refer to elements $x \in \calX$ as pairs $(i, j) \in \N \times \N$. 

    Fix an arbitrarily large $n \in \mathbb{N}$. 
    
    We define language $L_i$ as follows for any $i \geq 1$:

    \[L_i = \{(i, j) : j \in \N\} \cup \bigcup_{k \in [i - 1]}\{(k, j) : j \in [n^k]\} \]

    Note that this is a valid language class, because each language has infinite support, and for any particular language $L_i$, 
    \[m(L_i) = |\bigcup_{k \in [i - 1]}\{(k, j) : j \in [n^k]\}| = \sum_{j = 1}^{i - 1}n^j\]

    With $m(L_1) = 0$. Similarly note that 

    \[|\bigcap_{j \geq i} L_j| = m(L_i) + n^i = \omega(m(L_i)).\]

    Given this language class, and any index $i^* \geq 2$, we describe an adversary strategy that either results in a time-of-last-mistake equal to $m(L_i) + n^i = \omega(m(L_i))$ for some language $i$, or forces the number of mistakes on $i^*$ to be at least $i^* - 2$. 
    
    We have the adversary first enumerate all $(1, j)$ for each $j \in [n]$, then $(2, j)$ for $j \in [n^2]$, $(3, j)$ for each $j \in [n^3]$,  etc.

    Let $t_i$ be the timestep at which the adversary would output $x_t = (i, 1)$ for each $i \in \mathbb{N}$. Note that before seeing $(i, 1)$, all of the adversary's outputs thus far are consistent with $L_{j}$ for all $j \geq i - 1$. At each such timestep, starting at $t_2$, the adversary observes $\hat{x}_{t_i}$, and proceeds as follows:
    \begin{enumerate}
        \item if $\hat{x}_{t_i} \not\in L_{i-1}$, the adversary stops its enumeration, and instead enumerates all of the remaining elements of $L_{i-1}$. 
        \item Otherwise, if $i = i^*$, the adversary stops and enumerates all remaining elements of $L_{i^*}$. 
    \end{enumerate}

    Note that we are guaranteed to trigger one of steps 1 or 2 at some point during the game, as $i^*$ is finite. If Step 1 executes, this implies a contradiction to the last-mistake-time bound for language $L_{i - 1}$, as all of the adversary's outputs thus far and in the future are consistent with $L_{i - 1}$, and the generator makes a mistake on $L_{i-1}$ at time
    \[t_i = 1 + \sum_{j = 1}^{i -1} n^j = m(L_{i -1}) + n^{i-1} + 1 = \omega(m(L_{i - 1})).\]

    Note that by construction, at timestep $t_i$, we have that for any $j \geq i$, $L_j \cap L_{i-1}\setminus x_{1:t_i - 1} = \emptyset$, and thus if the generator plays $\hat x_{t_i} \in L_{i - 1}$, we must have $\hat x_{t_i} \not\in L_j$ for all $j \geq i$, and thus this counts as a mistake if any such $j$ ends up being the target language. 

    If step 1 never executes, this means that the generator plays from $L_{i - 1}$ at each $t_i$ for $i = 2, ...., i^*$, and thus makes at least $i^* - 1$ mistakes on $L_{i^*}$.

    Thus, in either case the generator is forced to either contradict the last-mistake-time-bound of $m(L_i)$ for some $i \leq i^*$, or make a linear number of mistakes on $i^*$. 
\end{proof}

\section{Mistake-Bounded Noisy Generation}\label{sec:noisy-extension}
The algorithms presented in Sections~\ref{sec:finite-classes} and \ref{sec:infinite-classes} rely on eliminating candidate languages as soon as they become inconsistent with the adversary's outputs. While this approach works well in a setting where we assume the adversary is perfectly consistent with a target language, it's also natural to explore what guarantees are possible against \emph{noisy} adversaries, who may not be perfectly consistent with the target language. 

This imperfect setting has been studied in the context of last-mistake-time guarantees by a number of works. \cite{raman2025generation} first proposed a noisy setting where the adversary may insert noise at an unknown but finite number of steps. More recently, \cite{bai2025language} further explore the limits of generating in the presence of a finite amount of noise, while \cite{mehrotra2025language} show a surprising result that any countable class can be generated in the limit in the presence of potentially infinite but vanishing noise. 

In this section, we show that noise-dependent mistake bounds for the noisy setting can be derived by leveraging results from the learning from demonstrations setting. The key idea behind these results is to relax the weight updates to downweight languages when the adversary makes a mistake, rather than setting the weight to 0 and removing them from consideration. 

\begin{restatable}[Noisy Mistake Bound for Finite Classes]{lem}{finitenoise}\label{lem:noisy-finite}
    Given a finite language class $\calL$ and $\gamma \in (0, 3/4]$, given an adversarial stream $x_1, x_2, ...$, not necessarily consistent with any $L \in \calL$, denote the number of mistakes the stream has made on $L$ at time $t$ as
    \[M_L(x_{1:t}) = \sum_{i = 1}^t \mathbf{1}[x_i \not\in L].\]
    Similarly, denote the learner's mistakes up to time $t$ as $M_L(\hat x_{1:t})$. Then, there exists an algorithm that guarantees for any $L$, 
    \[M_L(\hat x_{1:t}) \leq (1 + 2\gamma)M_L(x_{1:t}) + \log_2 |\calL|/\gamma.\]
\end{restatable}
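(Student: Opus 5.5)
The plan is to run a \emph{soft} version of Algorithm~\ref{alg:non-uniform-mistake-bound} on the finite class, with $f(t)=|\calL|=N$ and $w_0\equiv 1$, but with languages never zeroed out. The learner still outputs $\hat x_t=\arg\max_{x\in\calX\setminus x_{1:t-1}}\sum_{L} w_{t-1}(L)\mathbf 1[x\in L]$. We assume here, as in Remark~\ref{rem:expositional-choices}, that the (possibly noisy) adversary still outputs unique elements. After observing $x_t$, each language falls into one of four categories:
\begin{enumerate}
\item $x_t\in L$, $\hat x_t\notin L$ (learner wrong, adversary right): set $w_t(L)=(1+\gamma)w_{t-1}(L)$.
\item $x_t\notin L$, $\hat x_t\in L$ (learner right, adversary wrong): set $w_t(L)=(1-\gamma)w_{t-1}(L)$.
\item $x_t,\hat x_t\in L$: leave $w_t(L)$ unchanged.
\item $x_t,\hat x_t\notin L$: leave $w_t(L)$ unchanged.
\end{enumerate}
Leaving the fourth category unchanged is deliberate. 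A learner mistake there is charged directly to a simultaneous adversary mistake, rather than to the weight. This is what should give a leading coefficient near $1$ instead of near $2$.

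\textbf{Step 1 (potential).} Show that the total weight $W_t=\sum_L w_t(L)$ is non-increasing, so $W_t\le N$. Let $A$ be the current weight of languages containing $x_t$ but not $\hat x_t$, and $B$ the weight of those containing $\hat x_t$ but not $x_t$. Since $x_t$ is unseen, it was a legal candidate, so the maximality of $\hat x_t$ gives
\[A+(\text{weight of languages containing both})\le B+(\text{weight of languages containing both}),\]
that is, $A\le B$. Hence $W_t-W_{t-1}=\gamma A-\gamma B\le 0$.

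\textbf{Step 2 (per-language accounting).} Fix $L$ and a horizon $t$. Let $a$, $e$ and $c$ count the rounds of the first, second and fourth categories respectively. Then $M_L(x_{1:t})=e+c$ and $M_L(\hat x_{1:t})=a+c$. Since $w_t(L)=(1+\gamma)^a(1-\gamma)^e\le W_t\le N$, taking logarithms gives
\[a\le \frac{\ln N}{\ln(1+\gamma)}+e\cdot\frac{\ln\frac1{1-\gamma}}{\ln(1+\gamma)}.\]
Adding $c$ to both sides yields
\[M_L(\hat x_{1:t})\le r(\gamma)\,(e+c)+\frac{\ln N}{\ln(1+\gamma)},\qquad r(\gamma)=\frac{\ln\frac1{1-\gamma}}{\ln(1+\gamma)}\ge 1.\]

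\textbf{Step 3 (analytic inequalities; the main obstacle).} It remains to verify two calculus facts on $(0,3/4]$.
\begin{enumerate}
\item $\ln(1+\gamma)\ge\gamma\ln 2$. This holds because $\ln(1+\gamma)/\gamma$ is decreasing and equals $\ln 2$ at $\gamma=1$. It converts the additive term into $\log_2 N/\gamma$.
\item $r(\gamma)\le 1+2\gamma$, which is the delicate step. Write $\ln\frac1{1-\gamma}=\sum_{k\ge1}\gamma^k/k$ and $\ln(1+\gamma)=\sum_{k\ge1}(-1)^{k+1}\gamma^k/k$. For small $\gamma$ one expects $r(\gamma)=1+\gamma+O(\gamma^2)$. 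The goal is to show $(1+2\gamma)\ln(1+\gamma)-\ln\frac1{1-\gamma}\ge0$. I would try to prove this via its derivative, or by checking that it vanishes at $0$ and is increasing up to the endpoint. The endpoint $\gamma=3/4$ is tight-ish: $\ln 4/\ln 1.75\approx2.48\le2.5$. This explains the restriction $\gamma\le 3/4$, so care is needed there.
\end{enumerate}
Combining the two facts yields the claimed bound $M_L(\hat x_{1:t})\le(1+2\gamma)M_L(x_{1:t})+\log_2|\calL|/\gamma$.
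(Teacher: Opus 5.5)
Your proof is correct. It is the same multiplicative-weights argument that underlies the paper's proof, but you carry it out directly in the generation setting.

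\textbf{How the paper's route differs.} The paper proves the lemma as a black box. It maps generation to learning from demonstrations via Lemma~\ref{lem:reduction}, with contexts the transcripts and rewards $r_i(x_s,y)=\mathbf{1}[y\in L_i\setminus s]$. It then cites the agnostic theorem of \cite{joshi2025learning}. Unrolled, the algorithm used there (the finite-class case of Algorithm~\ref{alg:inf-reward-stream}) is yours except in your fourth category. There $\lambda_i(x_t,\hat y_t)=\lambda_i(x_t,y_t)=1$, so the weight is multiplied by $(1+\gamma)(1-\gamma)=1-\gamma^2$. That rule gives $(1+\gamma)^{a+c}(1-\gamma)^{e+c}\le N$, hence $a+c\le \ln N/\ln(1+\gamma)+r(\gamma)(e+c)$. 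This is the same leading coefficient you end with.

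\textbf{Your category-four rule.} Leaving category four untouched is harmless: your potential just forgoes the extra $-\gamma^2 C$ decrease, where $C$ is the category-four weight. But it is not what keeps the coefficient near $1$. It only buys the slightly sharper intermediate bound $a+c\le \ln N/\ln(1+\gamma)+r(\gamma)e+c$, which you then relax.

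\textbf{What each route buys.} The paper's route is modular: the same reduction feeds Theorem~\ref{thm:lfd-inf-stream-noisy} for infinite streams. Also, because its reward is $\mathbf{1}[y\in L_i\setminus s]$, a repeated adversary element is automatically charged as an adversary error on every language. Your Step~1 instead needs $x_t$ to be unseen, so that it is a legal competitor in the argmax. That is fine under Remark~\ref{rem:expositional-choices}. Some such convention is unavoidable: if repeats were allowed and counted as correct, the bound would fail. Your route gives a self-contained proof with no context construction.

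\textbf{A caution on Step~3(2).} The function $g(\gamma)=(1+2\gamma)\ln(1+\gamma)+\ln(1-\gamma)$ is \emph{not} increasing up to $3/4$; for example, $g'(0.6)<0$. It peaks near $\gamma\approx 0.55$ and falls to $g(3/4)=2.5\ln 1.75-\ln 4\approx 0.013$, so the monotonicity check you propose would fail. Argue instead as follows:
\begin{itemize}
\item $g''(\gamma)=\frac{2}{1+\gamma}+\frac{1}{(1+\gamma)^2}-\frac{1}{(1-\gamma)^2}$ is strictly decreasing on $[0,1)$, with $g''(0)=2>0$.
\item Hence $g'$ is concave with $g'(0)=0$, and it changes sign exactly once on $(0,1)$, from positive to negative.
\item So $g$ is unimodal, and $\min_{[0,3/4]}g=\min\{g(0),g(3/4)\}=0$.
\end{itemize}
This is the inequality \cite{joshi2025learning} use, and the paper quotes it in the proof of Theorem~\ref{thm:lfd-inf-stream-noisy}, so citing it would also be acceptable. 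Your other inequality, $\ln(1+\gamma)\ge\gamma\ln 2$ on $(0,1]$, is immediate from concavity.
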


This noisy guarantee follows from the results of \cite{joshi2025learning} on learning from demonstrations. In Appendix~\ref{sec:lfd-noisy}, we show its derivation. If we instantiate this result with $\gamma = 1/2$ and assume our adversary is guaranteed to only have a finite amount of noise, then this result guarantees that our generator will only make a finite number of mistakes, no matter the amount of noise. In particular, its mistakes are bounded by $2(M + \log_2|\calL|)$, where $M$ is the amount of noise inserted by the adversary.

While the results of \cite{joshi2025learning} do not extend to infinite streams of reward functions, we provide an extension of their result (Theorem~\ref{thm:lfd-inf-stream-noisy}) that allows us to derive non-uniform mistake bounds for infinite streams of languages. Proper instantiation of the theorem's parameters gives the following guarantee for language generation in the presence of noisy adversaries (see Appendix~\ref{sec:lfd-noisy} for details):

\begin{restatable}[Noisy Mistake Bound for Infinite Streams]{lem}{noisyInf}\label{lem:noisy-infinite}
    Suppose we are given an infinite\\ stream of languages $\calL= L_1, L_2, ...$ and $\gamma \in (0, 3/4]$. Then, there exists a generation algorithm such that for every adversarial stream $x_1, x_2, ...$, not necessarily consistent with any $L_i \in \calL$ and $i, T \in \N$, 
    \[\sum_{t = 1}^T \mathbf{1}[\hat x_t \not\in L_i] \leq (1 + 2\gamma)\sum_{t = \lfloor \log_2 i\rfloor}^T \mathbf{1}[x_t \not\in L_i] + (1 + 2/\gamma)\log_2 i.\]
\end{restatable}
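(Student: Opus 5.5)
We prove Lemma~\ref{lem:noisy-infinite} by a soft version of Algorithm~\ref{alg:non-uniform-mistake-bound}, in which inconsistent languages are downweighted rather than zeroed. We use the prior $w_0(i) = 1/i^2$ (total mass $W = \pi^2/6$) and the growth function $f(t) = 2^t$. Language $L_i$ therefore enters the active set at the first $t$ with $2^t \ge i$, which happens no later than step $\lceil \log_2 i\rceil$; this is where the index $\lfloor \log_2 i\rfloor$ in the noise sum comes from.

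At each step the generator outputs the weighted-majority element
\[\hat x_t = \arg\max_{x\notin x_{1:t-1}}\sum_{j\le f(t)} w_{t-1}(j)\mathbf{1}[x\in L_j].\]
It then applies the multiplicative update of \cite{joshi2025learning}. If $x_t\notin L_j$, we multiply $w(j)$ by $(1-\gamma)$, or by a comparable factor $2^{-c}$. If $x_t\in L_j$ and $\hat x_t\notin L_j$, we multiply $w(j)$ by $(1+\gamma)$. Otherwise $w(j)$ is unchanged. A newly activated language $j$ is initialized at $w_0(j)$, with no retroactive penalty for its earlier noise.

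The analysis is the potential argument behind Theorem~\ref{thm:general-recipe}, now with noise terms. We track $\Phi_t = \sum_{j\le f(t+1)} w_t(j)$. There are two key inequalities.

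\textbf{Upper bound on the potential.} Activation adds at most $W$ in total. The multiplicative step never increases the mass of the active languages. The reason is the argmax: $x_t$ is a legal candidate at step $t$, so the weight of languages containing $x_t$ is at most the weight of languages containing $\hat x_t$. Hence the mass that is boosted (languages containing $x_t$ but not $\hat x_t$) is at most the mass of languages containing $\hat x_t$ but not $x_t$, and the latter is penalized. This is the same exchange argument as in \cite{joshi2025learning}, and it yields $\Phi_T\le W$ provided the penalty factor is at least as strong as the boost. The restriction $\gamma\le 3/4$ enters exactly in choosing these constants. So we expect to pick the boost $1+\gamma$ and the penalty $1-\gamma$, and then to bound $\log$ terms using $\log_2(1+\gamma)\ge \gamma$-type and $-\log_2(1-\gamma)\le \gamma+2\gamma^2$-type inequalities valid on $(0,3/4]$.

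\textbf{Lower bound on a single weight.} Fix $i$ and let $s_i\le\lceil\log_2 i\rceil$ be its activation step. Then
\[w_T(i) \ge w_0(i)\,(1+\gamma)^{A}(1-\gamma)^{B},\]
where $A$ counts steps $t\in[s_i,T]$ with $x_t\in L_i$ and $\hat x_t\notin L_i$, and $B$ counts steps $t\in[s_i,T]$ with $x_t\notin L_i$.

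\textbf{Combining.} From $w_T(i)\le\Phi_T\le W$ we get
\[A\log(1+\gamma) \le \log(W/w_0(i)) + B\log\tfrac{1}{1-\gamma}.\]
The learner's mistakes on $L_i$ number at most $s_i + A + B$: mistakes before activation, plus mistakes on clean rounds, plus at most one per noisy round. Dividing through and applying the elementary inequalities should give
\[A \le 2\gamma B + \tfrac{2}{\gamma}\log_2 i\]
after absorbing $\log_2(\pi^2/6)$ and the factor $2$ in $\log_2 i^2$. Adding $B + s_i\le B+\log_2 i+1$ then produces the shape $(1+2\gamma)B + (1+2/\gamma)\log_2 i$.

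The main obstacle is constant bookkeeping rather than any structural issue. We must make $\log_2 W=\log_2(\pi^2/6)$ and the off-by-one in $s_i$ fit into the stated $(1+2/\gamma)\log_2 i$. We must also choose the update factors so that the potential stays nonincreasing while the ratio $\log\frac1{1-\gamma}/\log(1+\gamma)\le 2\gamma$ holds. If the symmetric factors fail, we will instead take the penalty factor to be $(1+\gamma)^{-1}$ raised to a suitable power, or directly invoke the paper's Theorem~\ref{thm:lfd-inf-stream-noisy}, which gives the infinite-stream noisy LfD bound, and then reduce to it via the reduction of Appendix~\ref{sec:lfd-inf-rewards}. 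The remaining step would then be instantiating that theorem's parameters, and separately handling small $i$, where $\log_2 i=0$ and the prior bound must absorb $W$.
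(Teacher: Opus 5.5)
There is a genuine gap in how your update handles noisy rounds. Under your rule, a round with $x_t\notin L_i$ multiplies $w(i)$ by $(1-\gamma)$ whether or not $\hat x_t\in L_i$. So the learner's mistakes on noisy rounds leave no trace in $w(i)$, and you pay for them separately through the $+B$ in your count $s_i+A+B$. To reach $(1+2\gamma)B$ you would then need $A\le 2\gamma B+O(\log i/\gamma)$, i.e.\ $\log\frac{1}{1-\gamma}\le 2\gamma\log(1+\gamma)$. This fails for every $\gamma\in(0,1)$: since $(1+\gamma)(1-\gamma)<1$, the ratio $\rho=\log\frac{1}{1-\gamma}/\log(1+\gamma)$ is strictly greater than $1$ (about $1+\gamma$ for small $\gamma$).

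Weakening the penalty, e.g.\ to $(1+\gamma)^{-c}$, does not help. The exchange argument only bounds the boosted mass by the mass of languages containing $\hat x_t$ but not $x_t$, so keeping the potential from growing forces a penalty factor at most $1-\gamma$, i.e.\ $c\ge\rho>1$. Your accounting therefore tops out at $(1+\rho)B$, a coefficient above $2$.

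This is not just slack in the analysis. Take $L_1,L_2$ disjoint, the rest of the stream disjoint from both, and target $L_1$. Run $B$ noisy rounds drawn from $L_2$. After $O(1/\gamma)$ of them $L_2$ is heavier, and from then on your learner errs every round while $w(1)$ keeps shrinking by $(1-\gamma)$ and $w(2)$ is untouched. The subsequent clean rounds from $L_1$ then cost about $B\log\frac{1}{1-\gamma}/\log\frac{1+\gamma}{1-\gamma}\approx B/2$ further mistakes before $w(1)$ catches up. That is roughly $\frac32 B$ in total, which exceeds $(1+2\gamma)B+O(1)$ for $\gamma\le 1/4$ and large $B$.

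The missing idea is that \emph{every} learner mistake must boost the weight, including mistakes on noisy rounds. This is exactly the fallback you mention, and it is the paper's proof. Run Algorithm~\ref{alg:inf-reward-stream} with $\gamma\in(0,3/4]$, $w_0(i)=1/i^2$, $f(t)=2^t$, invoke Theorem~\ref{thm:lfd-inf-stream-noisy}, and transfer the bound through Lemma~\ref{lem:reduction}. There $w(j)$ is multiplied by $(1+\gamma)^{\mathbf{1}[\hat x_t\notin L_j]}(1-\gamma)^{\mathbf{1}[x_t\notin L_j]}$, so a mistake on a noisy round costs only the net factor $1-\gamma^2$.

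The active mass still does not increase. Its change is $\gamma$ times the weight of languages containing $x_t$ minus the weight of those containing $\hat x_t$, minus $\gamma^2$ times the weight of languages containing neither. Now let $\hat R$ be the \emph{total} number of post-activation mistakes. Then $w_0(i)(1+\gamma)^{\hat R}(1-\gamma)^{B}\le W$ gives $\hat R\le \log_2(W/w_0(i))/\gamma+(1+2\gamma)B$ directly, with no extra $+B$. The restriction $\gamma\le 3/4$ is used only here, for $\rho\le 1+2\gamma$, not in the potential step.

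Adding the $s_i-1\le\log_2 i$ pre-activation mistakes (rather than $s_i\le\log_2 i+1$) yields $(1+2\gamma)B+(1+2/\gamma)\log_2 i+\log_2(\pi^2/6)/\gamma$. Your worry about small $i$ is justified. The additive $\log_2(\pi^2/6)/\gamma$ cannot be absorbed into $(1+2/\gamma)\log_2 i$; for instance, at $i=1$ that term vanishes. The paper's own derivation reaches exactly this expression and then omits the constant in its final display.
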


As in the finite case, we note that this result implies that when the adversary's stream only has a finite amount of noise, the generator is also guaranteed to make at most a finite number of mistakes.

\subsection{Comparison to existing time-of-last-mistake bounds}

The guarantee of Lemma~\ref{lem:noisy-finite} echoes that of~\cite{raman2025generation}, who show that whenever the adversary only adds a finite amount of noise, the time-of-last-mistake can be bounded for any finite language class. Similarly, Lemma~\ref{lem:noisy-infinite} guarantees in cases where the adversary makes a finite amount of mistakes, we can guarantee a non-uniform but finite number of mistakes for any countable class. This can be compared to the results of \cite{mehrotra2025language}, who show that any countable language can be generated in the limit in the presence of a finite amount of noise.  We leave as a key open question whether the time-of-last-mistake bounds given by \cite{raman2025generation} and \cite{mehrotra2025language} can be achieved simultaneously with the mistake bounds guaranteed in Lemmas~\ref{lem:noisy-finite} and \ref{lem:noisy-infinite}, respectively.

Interestingly, while \cite{mehrotra2025language} are able to guarantee a finite number of mistakes (upper-bounded by the finite last-mistake-time) in the presence of infinite but vanishing noise, our mistake bounds do not necessarily guarantee a finite number of mistakes in such a case, just that the rate of mistakes made by the learner is no worse than that of the adversary. We leave developing improved algorithms with a bounded number of mistakes in the $o(1)$-noise case, along with further analysis of the interaction between time-of-last-mistake and number of mistakes, as an open question for future work.

\section{Open Directions}
Our introduction of mistake bounds as an alternative metric of success in language generation raises several key questions. We note three of particular interest. (1) Our bounds for finite classes rely on cardinality ($\log_2 |\calL|$). In online classification, the \emph{Littlestone dimension} characterizes learnability independent of class size \citep{littlestone1988learning}. Does there exist an analogous combinatorial dimension $d(\calL)$ for language generation that characterizes the optimal mistake bound more tightly than cardinality or closure dimension? (2) Theorem~\ref{thm:non-uniform-tradeoff} identifies a conflict between minimizing mistakes ($O(\log i)$) and minimizing time-of-last-mistake ($O(m(L_i))$). What is the full Pareto frontier of simultaneous mistake and convergence guarantees? (3) Finally, in the noisy setting, it remains an open problem whether our mistake bounds can be achieved \emph{simultaneously} with bounded time-of-last-mistake guarantees.

\section*{Acknowledgments}
JK is supported in part by the Simons Foundation Collaboration on the Theory of Algorithmic Fairness and a grant from the MacArthur Foundation. CP is supported by the Simons Foundation Collaboration on the Theory of Algorithmic Fairness and the Apple Scholars in AI/ML PhD fellowship. OR is supported by the Simons Foundation Collaboration on the Theory of Algorithmic Fairness, and the Simons Foundation investigators award 17351.

\bibliographystyle{plainnat}
\bibliography{ref}

\medskip

\appendix
\section{Proofs from Section~\ref{sec:general-recipe}}\label{sec:general-recipe-pf}

In this section, we prove Theorem~\ref{thm:general-recipe}, which shows that our general algorithm (Algorithm~\ref{alg:non-uniform-mistake-bound}) guarantees a mistake bound that depends on the choice of initial weights $w_0$ and growth function $f$. We restate the theorem here for readability:

\generalrecipe*

Our proof relies on the following lemma, which establishes a key property about how the weights in Algorithm~\ref{alg:non-uniform-mistake-bound} evolve:

\begin{restatable}{lem}{weightfnBd}\label{lem:weight_fn_bound}
    Define $W_t := \sum_{i = 1}^{f(t+1)}w_t(i)$. Then, for any step $t \geq 1$, 
    \[W_t \leq W_{t - 1} + \sum_{i = f(t) + 1}^{f(t+1)} w_0(i).\]
\end{restatable}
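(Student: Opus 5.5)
We split $W_t$ into the weights of the languages already active at step $t$ (indices $i \le f(t)$) and the newly initialized ones (indices $f(t)+1,\dots,f(t+1)$):
\[
W_t = \sum_{i=1}^{f(t)} w_t(i) + \sum_{i=f(t)+1}^{f(t+1)} w_t(i).
\]
The second sum is immediate. By the initialization rule each new $w_t(i)$ is either $w_0(i)$ or $0$, so this sum is at most $\sum_{i=f(t)+1}^{f(t+1)} w_0(i)$. It therefore suffices to show
\[
\sum_{i=1}^{f(t)} w_t(i) \le \sum_{i=1}^{f(t)} w_{t-1}(i) = W_{t-1},
\]
where the equality is just the definition of $W_{t-1}$ with upper limit $f(t)$. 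In other words, the update step never increases the total weight of the active languages.

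To prove this, partition $[f(t)]$ according to membership of $x_t$ and $\hat x_t$. Write
\[
A = \sum_{i \le f(t)} w_{t-1}(i)\mathbf{1}[x_t \in L_i,\ \hat x_t \in L_i], \qquad
B = \sum_{i \le f(t)} w_{t-1}(i)\mathbf{1}[x_t \in L_i,\ \hat x_t \notin L_i],
\]
and let $C$ be the weight on languages with $x_t \notin L_i$. By the update rule,
\[
\sum_{i\le f(t)} w_t(i) = A + 2B, \qquad \text{while} \qquad W_{t-1} = A + B + C.
\]
So we need $B \le C$.

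The main step, and the place where the choice of $\hat x_t$ is used, is the comparison
\[
A + B = \sum_{i\le f(t)} w_{t-1}(i)\mathbf{1}[x_t\in L_i] \le \sum_{i\le f(t)} w_{t-1}(i)\mathbf{1}[\hat x_t\in L_i].
\]
This holds because $\hat x_t$ maximizes this score over $\calX\setminus x_{1:t-1}$, and $x_t$ lies in that set by the uniqueness assumption on the adversary. The right-hand side splits as $A + C'$, where $C'$ is the weight on languages with $\hat x_t \in L_i$ and $x_t\notin L_i$, and clearly $C' \le C$. Hence $A + B \le A + C' \le A + C$, which gives $B \le C$.

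Substituting back, $A + 2B \le A + B + C = W_{t-1}$. Adding the bound on the newly initialized weights yields
\[
W_t \le W_{t-1} + \sum_{i=f(t)+1}^{f(t+1)} w_0(i).
\]
Two details remain to check. First, the argmax must exist, i.e.\ the supremum is attained. It is, since the score takes only finitely many values, being a sum over $f(t)$ languages. Second, the indexing must line up: weights are defined for $i\le f(t)$ at time $t-1$, which matches the initialization at step $t-1$ covering indices up to $f(t)$. For $t=1$ we interpret $W_0=\sum_{i\le f(1)} w_0(i)$.
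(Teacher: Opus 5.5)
Your proof is correct and takes essentially the same route as the paper's: split off the newly initialized weights, then use the fact that $\hat x_t$ is the argmax over unseen elements while $x_t$ is itself unseen, which shows the update never increases the active weight. Your $A,B,C$ bookkeeping (reducing $A+2B \le A+B+C$ to $B \le C$) is just a cleaner version of the paper's manipulation through the nonzero-weight index sets $I_t \subseteq I_{t-1}$. Your added remark that the argmax is attained is a genuine detail the paper omits.
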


\begin{proof}
    We re-express $W_t$ by decomposing the weights:
    \begin{align*}
        W_t &= \sum_{i = 1}^{f(t)} w_t(i) + \sum_{i = f(t) + 1}^{f(t + 1)} w_t(i) \\
        &\leq \sum_{i = 1}^{f(t)} w_t(i) + \sum_{i = f(t) + 1}^{f(t + 1)} w_0(i) \tag{Definition of weight initialization}
    \end{align*}

    Thus, it suffices to prove that $\sum_{i = 1}^{f(t)} w_t(i) \leq W_{t - 1} = \sum_{i = 1}^{f(t)} w_{t-1}(i) $. 

    Let $I_t = \{1 \leq i \leq f(t) : w_t(i) \neq 0\}$. Using this set, we re-express our sum as 

    \begin{align*}
        \sum_{i = 1}^{f(t)} w_t(i) &= \sum_{i \in I_t} w_t(i)\\
        &= 2\sum_{i \in I_t, \hat{x}_t \not\in L_i}w_{t-1}(i) + \sum_{i \in I_t, \hat{x}_t \in L_i}w_{t-1}(i) \tag{Definition of weight update}\\
        &= \sum_{i \in I_t}w_{t-1}(i) + \sum_{i \in I_t, \hat{x}_t \not\in L_i}w_{t-1}(i)
    \end{align*}

    Let $I_{t - 1} = \{i \leq f(t) : w_{t-1}(i) \neq 0\}$. Note that $I_{t} \subseteq I_{t - 1}$. Because all weights are non-negative, we get an upper bound by replacing $I_t$ with $I_{t -1}$ in the above expression:

    \begin{align*}
        &\sum_{i \in I_t}w_{t-1}(i) + \sum_{i \in I_t, \hat{x}_t \not\in L_i}w_{t-1}(i) \\
        &\leq \sum_{i \in I_t}w_{t-1}(i) + \sum_{i \in I_{t-1}, \hat{x}_t \not\in L_i}w_{t-1}(i)\\
        &= \sum_{i \in I_{t-1}, x_t \in L_i}w_{t-1}(i) + \sum_{i \in I_{t-1}, \hat x_t \not\in L_i}w_{t-1}(i)\\
        &= W_{t -1} + \sum_{i \in I_{t-1}, x_t \in L_i}w_{t-1}(i) - \sum_{i \in I_{t-1}, \hat x_t \in L_i}w_{t-1}(i)
    \end{align*}

    By the definition of Algorithm~\ref{alg:non-uniform-mistake-bound}, $\hat x_{t}$ is picked according to the rule
    \[\hat x_{t} = {\arg\max}_{x \in \mathcal{X} \setminus x_{1:t-1}}\sum_{i = 1}^{f(t)}w_{t - 1}(i)\mathbf{1}[x \in L_i] = {\arg\max}_{x \in \mathcal{X} \setminus x_{1:t-1}}\sum_{i \in I_{t - 1}, x \in L_i}w_{t - 1}(i).\]

    And by assumption, the adversary's output $x_t$ comes from the set $\calX \setminus x_{1:t-1}$. Thus, we are guaranteed that $\sum_{i \in I_{t-1}, x_t \in L_i}w_{t-1}(i) - \sum_{i \in I_{t-1}, \hat x_t \in L_i}w_{t-1}(i) \leq 0$ and can conclude that $\sum_{i = 1}^{f(t)} w_t(i)$ is upper-bounded by $W_{t - 1}$, completing the proof. 
\end{proof}

    We now apply this lemma to prove Theorem~\ref{thm:general-recipe}.

    \begin{proof}[Proof of Theorem~\ref{thm:general-recipe}]
        We first note that unrolling Lemma~\ref{lem:weight_fn_bound} gives a constant upper bound on any $W_t$:
        \[W_t \leq W_{t - 1} + \sum_{i = f(t) +1}^{f(t+ 1)} w_0(i) \leq \sum_{i = 1} \leq \sum_{i = 1}^{f(t + 1)} w_0(i) \leq W.\]

        For a particular true language $L_i$, the first step where $L_i$ is considered in the argmax by the algorithm is at $f^{-1}(i) + 1$. Up until this point, the algorithm could have made at most $f^{-1}(i)$ mistakes. 

        We now focus on how many additional mistakes the algorithm can make after first introducing $L_i$. Suppose that at a timestep $T$, we have that
        \[\sum_{t = f^{-1}(i) + 1}^T \mathbf{1}[\hat x_t \not\in L_i = M,\]
        i.e. the algorithm has made $M$ mistakes after first introducing $L_i$. By definition of our update rule, for each of these timesteps, we had $x_t \in L_i$ because it is the true language, and $\hat x_t \not\in L_i$. Thus, the initial weight of $L_i$ has doubled $M$ times, and we have
        \[w_T(i) = 2^Mw_0(i) \leq W_T \leq W.\]
        Where we get an upper bound of $W$ from Lemma~\ref{lem:weight_fn_bound}. Solving for $M$, we conclude that
        \[M \leq \lfloor \log_2 W/w_0(i)\rfloor.\]
        Because this holds for any $T$, we conclude that the algorithm makes at most $\lfloor \log_2 W/w_0(i)\rfloor$ additional mistakes after the introduction of $L_i$, and thus makes at most $f^{-1}(i) + \lfloor \log_2 W/w_0(i)\rfloor$ mistakes in total.
    \end{proof}

\section{Mistake Bounds via Learning from Demonstrations}\label{sec:lfd-connections}

Recent work by \cite{joshi2025learning} studies the setting of Learning from Demonstrations (LfD). In this section, we discuss how the problem of mistake bounded language generation can be reduced to this setting, allowing us to use the techniques of~\cite{joshi2025learning} to derive mistake bounds. 

The online setting of learning from correct demonstrations can be thought of as a contextual bandit problem. Let $\calX$ be a set of contexts, and $\calY$ be a set of actions. A reward function $r: \calX \times \calY \rightarrow \{0, 1\}$ specifies a binary reward for each context/action pair.  

The online interaction between the learner and adversary proceeds as follows for each round $t= 1, 2, 3, ..., T$:
\begin{enumerate}
    \item The learner receives context $x_t \in \calX$ from the adversary.
    \item The learner outputs an action $\hat y_t \in \calY$.
    \item The learner receives a demonstration $y_t \in \calY$ from the adversary. \emph{The learner receives no other feedback.}
\end{enumerate}

In the setting of learning from an optimal demonstrator, or what can be thought of as the ``realizable'' setting of the problem, the learner has access to a collection of reward functions $\calR$. The adversary is assumed to be optimal, in that there exists some true reward function $r^* \in \calR$ such that for all timesteps $t$, $r^*(x_t, y_t) = 1$. We assume that all reward functions and contexts have at least one action that results in non-zero reward. 

Crucially, this true reward $r^*$ is unknown to the learner and the learner never receives feedback about the reward values of its output actions. Nevertheless, the learner's performance is evaluated with respect to this true reward. We say that the learner makes a \emph{mistake} at timestep $t$ if $r^*(x_t, \hat y_t) = 0$. 

Typical online learning strategies typically rely on feedback at every timestep in order to update a policy toward more successful outputs. The lack of such feedback in this online demonstration setup thus makes it seem difficult, and surprising that a learner could hope to bound its mistakes. However, this is the main result of \cite{joshi2025learning}, who show that for finite reward classes $\calR$, there exists an algorithm that guarantees a learner makes no more than $\log_2 |\calR|$ mistakes on any $\calR$:

\begin{restatable}[\cite{joshi2025learning}, Theorem 4]{thm}{joshilfdguarantee}
    There exists an online algorithm such that given any online sequence $((x_t, y_t))_{t \in \mathbb{N}}$ such that there exists $r^* \in \calR$ with $r^*(x_t, y_t) = 1$ for all $t$, the algorithm makes at most $\log_2|\calR|$ mistakes, e.g. 
    \[\sum_{t \in \mathbb{N}} (1 - r^*(x_t, \hat y_t)) \leq \log_2 |\calR|.\]
\end{restatable}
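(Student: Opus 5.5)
The plan is to run a weighted-majority scheme with doubling updates and a potential argument, essentially Algorithm~\ref{alg:non-uniform-mistake-bound} lifted to contexts and actions with a uniform prior. Maintain weights $w_t(r)$ for $r \in \calR$, initialized to $w_0(r) = 1$. At round $t$, on context $x_t$, output
\[\hat y_t \in {\arg\max}_{y \in \calY} \sum_{r \in \calR} w_{t-1}(r)\, r(x_t, y).\]
After seeing the demonstration $y_t$, update as follows:
\begin{itemize}
    \item set $w_t(r) = 0$ if $r(x_t, y_t) = 0$;
    \item keep $w_t(r) = w_{t-1}(r)$ if $r(x_t,y_t) = r(x_t,\hat y_t) = 1$;
    \item double, $w_t(r) = 2w_{t-1}(r)$, if $r(x_t,y_t) = 1$ and $r(x_t,\hat y_t) = 0$.
\end{itemize}
Every part of this rule uses only $x_t$, $\hat y_t$, $y_t$ and the known class $\calR$, so the learner never needs reward feedback.

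The first step is to show that $W_t = \sum_r w_t(r)$ is non-increasing, mirroring Lemma~\ref{lem:weight_fn_bound}. Write $A = \sum_r w_{t-1}(r)\, r(x_t,y_t)$, the weight surviving the demonstration, and $B = \sum_r w_{t-1}(r)\, r(x_t,\hat y_t)$. The new total is the surviving weight plus the doubled extra:
\[W_t = A + \sum_{r:\, r(x_t,y_t)=1,\ r(x_t,\hat y_t)=0} w_{t-1}(r) \le A + (W_{t-1} - B).\]
By the argmax choice of $\hat y_t$, and since $y_t$ is a feasible action, $A \le B$. Hence $W_t \le W_{t-1} \le W_0 = |\calR|$.

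Next, the true reward $r^*$ always has $r^*(x_t,y_t)=1$, so it is never zeroed. Its weight doubles exactly at the mistake rounds, those with $r^*(x_t,\hat y_t)=0$. After $M$ mistakes we therefore have $2^M \le w_t(r^*) \le W_t \le |\calR|$, which gives $M \le \log_2|\calR|$. Because this holds uniformly in $t$, it bounds the infinite sum of mistakes.

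The only delicate point is the inequality $A \le B$. It relies on the argmax being taken over the same action set from which the demonstration $y_t$ is drawn, and on weights being nonnegative so the inequality is not reversed. A secondary issue arises when $\calY$ is infinite: the argmax must exist. Since $\calR$ is finite, the weighted score takes finitely many values, so a maximizer always exists and there is no genuine obstacle here.
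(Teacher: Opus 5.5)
Your proof is correct and follows essentially the same route as the paper. The paper only cites this result, but its own generalization, Theorem~\ref{thm:lfd-inf-stream}, reduces to your algorithm: take $\gamma = 1$ in Algorithm~\ref{alg:inf-reward-stream} with $w_0 \equiv 1$ and $f \equiv |\calR|$, so $W = |\calR|$ and $f^{-1}(i) = 0$. It uses the same potential argument as Lemma~\ref{lem:weight_fn_bound}: the argmax choice of $\hat y_t$ scores at least as high as the demonstration, so the total weight never grows, while the target's weight doubles at each mistake. Your decomposition $W_t \le A + (W_{t-1} - B)$ with $A \le B$ is a clean version of that step. Your remark that the argmax exists even for infinite $\calY$ fills a point the paper leaves implicit.
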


\subsection{Connecting Mistake-Bounded Language Generation to Learning from Demonstrations}

In this section, we describe how guarantees for learning from demonstrations can be translated to mistake bounds for language generation. At first glance, one might assume that language generation is a special case of learning from demonstrations where there is only a single context, and each reward function is an indicator of a particular language. 

However, language generation's requirement that the generator output \emph{unseen} elements means that this is not quite the case, as a learner in the single context setting could succeed by continuously outputting the adversary's initial output. To draw a true comparison, we must carefully construct contexts and reward functions based on a language class to ensure a learner can only receive high reward by outputting new elements. We describe the full reduction in the theorem below. 

We present the reduction in the most general terms possible, i.e. for potentially non-uniform mistake bounds dependent on the position of the reward function in the stream and an imperfect adversary, as well as in terms of the timestep $T$. For the remainder of this section, we denote the universe of elements used in language generation as $\mathcal{U}$, with elements $u \in \mathcal{U}$, to differentiate from the context space in LfD.

\begin{restatable}{lem}{reduction}\label{lem:reduction}
    Suppose we have access to a learning from demonstrations algorithm such that for any choice of contexts $\calX$, label set $\calY$, and reward class $\calR = r_1, r_2, ...$, presented as a (potentially infinite) stream of functions, the algorithm guarantees that for any adversarial stream $(x_1, y_1), (x_2, y_2), ...$, the following mistake bound is satisfied for each $r_i \in \calR$ and $T \in \mathbb{N}$:
    \[\sum_{t = 1}^T (1 - r_i(x_t, \hat y_t)) \leq M(i, T, \sum_{t = 1}^T (1 - r_i(x_t, y_t))).\]

    Then, for any language class $\calL = L_1, L_2, ...$, we can use this algorithm to construct a generator whose outputs are guaranteed to satisfy for any $L_i \in \calL$ and $T \in \mathbb{N}$, 

    \[\sum_{t = 1}^T \mathbf{1}[\hat u_t \not\in L_i \setminus u_{1:t-1}] \leq M(i, T, \sum_{t = 1}^T \mathbf{1}[u_t \not\in L_i \setminus u_{1:t-1}]).\]
\end{restatable}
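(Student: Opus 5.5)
We build the generator by simulating the assumed LfD algorithm on a specially designed LfD instance. The contexts are the finite transcripts seen so far: let $\calX$ be the set of all finite sequences of elements of $\mathcal{U}$, and let $\calY = \mathcal{U}$. For each language $L_i$ we define a reward function
\[r_i(x, y) := \mathbf{1}\big[y \in L_i \setminus x\big],\]
where for a finite sequence $x = (u_1, \dots, u_k)$ we write $L_i \setminus x$ for $L_i \setminus \{u_1, \dots, u_k\}$. The reward class is the stream $\calR = r_1, r_2, \dots$, indexed in the same order as $\calL$.

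Each $L_i$ is infinite and each context is finite, so $L_i \setminus x$ is nonempty. Hence every reward function has a rewarding action at every context, which meets the standing assumption of the LfD setting.

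The generator then runs as follows. At step $t$ it feeds the LfD learner the context $x_t := u_{1:t-1}$ and outputs the learner's action, $\hat u_t := \hat y_t$. It then observes the adversary's element $u_t$ and passes it back as the demonstration, $y_t := u_t$.

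Two points need checking.

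\textbf{The simulation is legal on both sides.} The context $x_t$ depends only on $u_{1:t-1}$, which the generator knows before it moves. So the sequence $(x_t, y_t)$ is a valid adaptive adversarial stream for the LfD learner, and its guarantee applies with $r_i$ for every $i$ and $T$. In the other direction, if the generator's output is a repeat, say $\hat u_t \in u_{1:t-1}$, this is simply scored as a mistake for every $L_i$, since $r_i(x_t, \hat u_t) = 0$. This is consistent with the quantity $\mathbf{1}[\hat u_t \notin L_i \setminus u_{1:t-1}]$ appearing in the statement. If one insists on strictly unseen outputs, we replace any repeated output by an arbitrary unseen element; this can only turn a failure into a success, because the indicator was already 1.

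\textbf{The mistake counts match term by term.} By construction,
\[1 - r_i(x_t, \hat y_t) = \mathbf{1}[\hat u_t \notin L_i \setminus u_{1:t-1}], \qquad 1 - r_i(x_t, y_t) = \mathbf{1}[u_t \notin L_i \setminus u_{1:t-1}].\]
Summing from $t = 1$ to $T$ and substituting into the assumed bound $M(i, T, \cdot)$ gives the claimed inequality directly.

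The step most likely to cause trouble is making the reward class well defined independently of the adversary. I expect it to be safe, because $r_i$ depends only on $L_i$ and not on the stream. Once that holds, the inequality is a direct substitution.
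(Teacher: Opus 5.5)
Your proposal is correct and follows the paper's proof essentially verbatim. It uses the same context space of finite transcripts, label space $\mathcal{U}$, rewards $r_i(x,y)=\mathbf{1}[y\in L_i\setminus x]$, and simulation $x_t=u_{1:t-1}$, $\hat u_t=\hat y_t$, $y_t=u_t$, with the two sums matched term by term; your extra checks (every $r_i$ has a rewarding action at each context, and repeated outputs can be replaced by unseen elements, which only turns the learner-side equality into a harmless inequality) are details the paper leaves implicit.
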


\cite{joshi2025learning}'s result for finite classes and perfect demonstrators can be interpreted in this setting as guaranteeing $M(i, T, 0) \leq \log_2 |\calR|$ for all $i \in [|\calR|]$ and $T \in \N$, and thus bounds the number of mistakes in the language setting by $M(i, T, 0) \leq \log_2 |\calL|$ as well. 

In the setting of sub-optimal demonstrators, where the demonstrator may not always give a demonstration with high reward, \cite{joshi2025learning} show that when the reward class is finite, the sub-optimality of the learner's reward with respect to $r_i$ can be upper-bounded in terms of the class size and adversary's suboptimality, thus guaranteeing 
\[M(i, T, \sum_{t = 1}^T (1 - r_i(x_t, y_t))) \leq (1 + 2\gamma)\sum_{t = 1}^T (1 - r_i(x_t, y_t)) + \log_2|\calR|/\gamma \]
for any choice of $\gamma \in (0, 1)$. This is the guarantee we leverage to provide mistake-bound guarantees for noisy adversaries in Section~\ref{sec:noisy-extension}. 

The results of \cite{joshi2025learning} only apply to settings with finite reward classes, where we can hope for a uniform guarantee over all possible reward functions. However, the setting of non-uniform language generation requires mistake bounds that can be guaranteed non-uniformly over a countably infinite stream of languages. 

While our reduction still applies to this case, there are no existing results that provide LfD guarantees for an infinite stream of rewards. In Section~\ref{sec:lfd-inf-rewards}, we give a new algorithm for the LfD setting that can give guarantees for an infinite stream of reward functions, allowing us to derive mistake bounds in the non-uniform language generation setting. 

We now present the proof of Lemma~\ref{lem:reduction}. 

\begin{proof}[Proof of Lemma~\ref{lem:reduction}]
    Given a language class $\calL = L_1, L_2, ...$ for a universe $\mathcal{U}$ and a general algoithm for LfD, we describe how to leverage the LfD algorithm to generate with bounded mistakes. 

    We first define the context space $\calX$ and label space $\calY$ for the algorithm. We let $\calY = \mathcal{U}$, and $\calX = \{x_{u_{1:n}} : n \geq 0, u_1, ..., u_n \in \mathcal{U}\}$, e.g. a unique context for each finite string of elements from $\mathcal{U}$. We refer to elements in $\calX$ as $x_s$, where $s$ is some finite string of elements $\{u_1, ..., u_n\}$.

    For a language $L_i$, we define a corresponding reward function $r_i$ as follows:

    \[r_i(x_s, y) = \mathbf{1}[y \in L_i \setminus s].\]

    Are reward function class of interest is thus $\calR = r_1, r_2, ...$, where each $r_i$ is identified with $L_i \in \calL$.

    We now describe how to use a learner for the above problem to generate with bounded mistakes. At the beginning of each step $t$, the elements that have been shown thus far by the adversary are some $u_1, ..., u_{t - 1}$. 

    To leverage the LfD algorithm, we act as the adversary and show the learner the context $x_{u_{1:t-1}}$. The learner outputs some $\hat y_t \in \mathcal{U}$, and we generate exactly this element in the language generation setting. We then see the adversary-generated element $u_t$, and make this the adversarial demonstration $y_t = u_t$ in the LfD algorithm. 

    Note that by definition, the noise of our language generation adversary on a particular language coincides with the sub-optimality of the demonstrator on the corresponding reward function, i.e. for any $L_i \in \calL$ and $T \in \mathbb{N}$, 

    \[\sum_{t = 1}^T \mathbf{1}[u_t \not\in L_i \setminus u_{1:t - 1}] = \sum_{t = 1}^T (1 - r_i(x_{u_{1:t-1}}, y_t). \]

    Similarly, the mistakes of our generator on $L_i$ are equal to the sub-optimality of the learner on $r_i$:
    \[\sum_{t = 1}^T \mathbf{1}[\hat{u}_t \not\in L_i \setminus u_{1:t - 1}] = \sum_{t = 1}^T (1 - r_i(x_{u_{1:t-1}}, \hat{y}_t). \]

    Thus, we conclude that if the LfD algorithm guarantees that at a particular step $T$ that for any $i$, the learner's actions satisfy the guarantee
     \[\sum_{t = 1}^T (1 - r_i(x_t, \hat y_t)) \leq M(i, T, \sum_{t = 1}^T (1 - r_i(x_t, y_t))),\]

     then the outputs of our constructed generator must also satisfy 
     \[\sum_{t = 1}^T \mathbf{1}[\hat u_t \not\in L_i \setminus u_{1:t-1}] \leq M(i, T, \sum_{t = 1}^T \mathbf{1}[u_t \not\in L_i \setminus u_{1:t-1}]).\]
    
\end{proof}

\subsection{Learning from Sub-Optimal Demonstrators and Noisy Adversaries}\label{sec:lfd-noisy}

In this section, we describe how to derive our mistake bounds for noisy adversaries from Lemmas~\ref{lem:noisy-finite} and \ref{lem:noisy-infinite}. Both results can be derived by leveraging the reduction outlined in Lemma~\ref{lem:reduction} along with an appropriate result from the LfD setting. The finite case immediately follows from an existing result of~\cite{joshi2025learning}, while Lemma~\ref{lem:noisy-infinite} follows from proper instantiation of Theorem~\ref{thm:lfd-inf-stream-noisy}.  

\subsubsection{Deriving Lemma~\ref{lem:noisy-finite}}

\cite{joshi2025learning} give the following result for learning from potentially sub-optimal demonstrations for a finite class of reward functions. 

\begin{restatable}[Theorem 4, \cite{joshi2025learning}]{thm}{joshiagnosticguarantee}
    There exists an algorithm such that for any finite $\calR$ such that $\max_{y} r(x, y) = 1$ for all $r \in \calR$ and $x \in \calX$, and any online sequence $(x_t, y_t)$, after the end of $T$ rounds, for any $0 < \gamma \leq 3/4$, 

    \[\sum_{t = 1}^T (1 - r(x_t, \hat y_t)) \leq (1 + 2\gamma)(\sum_{t = 1}^T(1 - r(x_t, y_t)) + \log_2|\calR|/\gamma.\]
\end{restatable}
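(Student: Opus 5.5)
I would prove this with a soft version of the multiplicative-weights scheme behind Algorithm~\ref{alg:non-uniform-mistake-bound}. The hard elimination is replaced by a bounded multiplicative penalty, so that a reward function on which the demonstrator errs is down-weighted rather than discarded. Concretely, give every $r \in \calR$ initial weight $w_0(r) = 1$, so that $\Phi_0 = |\calR|$. At round $t$, output
\[\hat y_t \in \arg\max_y \sum_{r} w_{t-1}(r)\, r(x_t, y).\]
After seeing $y_t$, update each weight by the factor
\[w_t(r) = w_{t-1}(r)\bigl(1 + \gamma\,( r(x_t,y_t) - r(x_t,\hat y_t))\bigr).\]
This factor lies in $\{1-\gamma, 1, 1+\gamma\}$ and is positive since $\gamma \le 3/4$. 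Notice that it is linear in $r(x_t,y_t) - r(x_t,\hat y_t)$ rather than doubling and zeroing as in the realizable case.

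\textbf{Step 1 (potential is non-increasing).} Let $A_t = \sum_r w_{t-1}(r) r(x_t,y_t)$ and $B_t = \sum_r w_{t-1}(r) r(x_t,\hat y_t)$. Then
\[\Phi_t = \Phi_{t-1} + \gamma(A_t - B_t) \le \Phi_{t-1},\]
because $\hat y_t$ maximizes exactly this weighted sum and $y_t$ is a feasible competitor. This is the same argmax step as in Lemma~\ref{lem:weight_fn_bound}. Hence $w_T(r) \le \Phi_T \le |\calR|$ for every fixed $r$.

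\textbf{Step 2 (counting).} Fix $r$ and classify each round $t \le T$ by the pair $(a,b) = (r(x_t,y_t), r(x_t,\hat y_t))$. Define
\[P = \#\{t : (a,b)=(1,0)\},\qquad Q = \#\{t : (a,b)=(0,1)\},\qquad n = \sum_{t=1}^T (1 - r(x_t,y_t)),\qquad m = \sum_{t=1}^T (1 - r(x_t,\hat y_t)).\]
Then $w_T(r) = (1+\gamma)^P (1-\gamma)^Q$. Also $m = P + \#\{(0,0)\} \le P + n - Q$. The weight bound from Step 1 gives
\[P \le \frac{\ln|\calR| + Q\ln\frac{1}{1-\gamma}}{\ln(1+\gamma)}.\]
Writing $c(\gamma) = \ln\frac{1}{1-\gamma}/\ln(1+\gamma) \ge 1$, we get
\[m \le n + (c-1)Q + \frac{\ln|\calR|}{\ln(1+\gamma)} \le c\,n + \frac{\ln|\calR|}{\ln(1+\gamma)},\]
where the last step uses $Q \le n$.

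\textbf{Step 3 (elementary inequalities; main obstacle).} It remains to check two inequalities on $(0,3/4]$.

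The first is $\ln(1+\gamma) \ge \gamma \ln 2$. This holds because $\ln(1+\gamma)$ is concave, vanishes at $0$, and equals $\ln 2$ at $\gamma = 1$. It converts $\ln|\calR|/\ln(1+\gamma)$ into at most $\log_2|\calR|/\gamma$.

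The second is $c(\gamma) \le 1+2\gamma$, equivalently
\[-\ln(1-\gamma) \le (1+2\gamma)\ln(1+\gamma).\]
This is the delicate one, and it is where the restriction $\gamma \le 3/4$ enters. Near $0$ we have $c(\gamma) \approx 1 + \gamma$. At $\gamma = 3/4$, $c \approx \ln 4 / \ln 1.75 \approx 2.48 < 2.5$, so the margin there is thin. I would prove it by setting
\[g(\gamma) = (1+2\gamma)\ln(1+\gamma) + \ln(1-\gamma).\]
Since $g(0) = 0$, it suffices to show $g' \ge 0$ on $[0,3/4]$, or else to show that $g$ is first increasing and then stays positive up to the endpoint. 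Since
\[g'(\gamma) = 2\ln(1+\gamma) + \frac{1+2\gamma}{1+\gamma} - \frac{1}{1-\gamma},\]
this is a one-variable check. If the derivative argument gets messy, I would fall back on Taylor bounds for small $\gamma$ plus interval evaluation near $3/4$.

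Combining Steps 2 and 3 gives
\[\sum_{t=1}^T (1 - r(x_t,\hat y_t)) \le (1+2\gamma)\sum_{t=1}^T (1 - r(x_t,y_t)) + \frac{\log_2|\calR|}{\gamma},\]
which is the claimed bound, reading the parentheses in the statement in the intended way. The assumption $\max_y r(x,y) = 1$ is not needed by this argument.
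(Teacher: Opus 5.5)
Your proof is correct. It uses the same multiplicative-weights potential strategy as the paper, but with a different update rule, and that changes where the work is done.

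The paper does not prove this statement itself; it cites Joshi et al. It is, however, the special case $w_0\equiv 1$, $f\equiv|\calR|$ (so $W=|\calR|$ and $f^{-1}(i)=0$) of Theorem~\ref{thm:lfd-inf-stream-noisy}. That proof uses the product update $w_t=w_{t-1}(1+\gamma)^{\lambda(x_t,\hat y_t)}(1-\gamma)^{\lambda(x_t,y_t)}$ with $\lambda(x,y)=\sup_{y'}r(x,y')-r(x,y)$. For binary rewards this agrees with your update except on rounds where both learner and demonstrator get reward $0$: there the paper multiplies by $1-\gamma^2$ and you multiply by $1$.

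The two routes trade off as follows.
\begin{itemize}
\item Your potential step is an exact identity, $\Phi_t=\Phi_{t-1}+\gamma(A_t-B_t)$. The paper (Lemma~\ref{lem:lfd_weight_fn_bound}) instead needs $(1\pm\gamma)^u\le 1\pm\gamma u$ for $u\in[0,1]$ and must discard a $-\gamma^2$ cross term.
\item In exchange, the paper's weight is exactly $w_0(1+\gamma)^{\hat R}(1-\gamma)^{R}$, where $\hat R$ and $R$ are the full learner and demonstrator losses. So $\hat R\le \log_2(W/w_0)/\log_2(1+\gamma)+c(\gamma)R$ follows with no further accounting. Your weight only records the disagreement rounds, so you need the extra bookkeeping $m=P+n-Q$, $Q\le n$, $c\ge 1$.
\item The paper's $\lambda$-formulation handles $[0,1]$-valued rewards as written. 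Your Step 2 is binary-specific; it adapts via $1+\gamma x\ge(1+\gamma)^x$ and $1-\gamma x\ge(1-\gamma)^x$ for $x\in[0,1]$, but you would need to say so.
\item Conversely, your update never references $\sup_y r$, which is why you can drop the normalization assumption.
\end{itemize}
Both routes rest on the same two numerical facts, $\log_2(1+\gamma)\ge\gamma$ and $c(\gamma)\le 1+2\gamma$ on $(0,3/4]$. The paper asserts these without proof.

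One correction in Step 3: your first option, showing $g'\ge 0$ on $[0,3/4]$, cannot work, because it is false. For example, $g'(3/4)=2\ln 1.75+\tfrac{2.5}{1.75}-4\approx -1.45$, and $g'$ changes sign around $\gamma\approx 0.55$. You need your second option, which goes through as follows.
\begin{itemize}
\item $g''(\gamma)=\frac{2}{1+\gamma}+\frac{1}{(1+\gamma)^2}-\frac{1}{(1-\gamma)^2}$ is strictly decreasing, so $g'$ is concave.
\item Since $g'(0)=0$ and $g''(0)=2>0$, $g'$ is positive just after $0$ and changes sign at most once on $(0,3/4]$.
\item Hence $g$ is unimodal there, and $g\ge\min\{g(0),g(3/4)\}=\min\{0,\;2.5\ln 1.75-\ln 4\}=0$, because $g(3/4)\approx 0.0127>0$.
\end{itemize}
With this fix the argument is complete. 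It proves the stronger reading $(1+2\gamma)\sum_t(1-r(x_t,y_t))+\log_2|\calR|/\gamma$ of the ambiguously parenthesized bound, which implies the other reading.
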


Combining this result with the reduction in Lemma~\ref{lem:reduction} immediately gives the mistake bound for noisy adversaries stated in Lemma~\ref{lem:noisy-finite}.

\subsubsection{Proof of Lemma~\ref{lem:noisy-infinite}}

\noisyInf*

\begin{proof}[Proof of Lemma~\ref{lem:noisy-infinite}]
    We instantiate Algorithm~\ref{alg:inf-reward-stream} with polynomially decreasing weights $w_0(i) = 1/i^2$ and exponential growth function $f(t) = 2^t$. Note that this implies $f^{-1}(i) = \lfloor \log_2 i\rfloor$. 

    Plugging this in to the guarantee of Theorem~\ref{thm:lfd-inf-stream-noisy}, we derive the following guarantee for general binary rewards. Note that because all languages have infinite support, we can assume that $\sup_{y \in \calY} r_i(x, y) = 1$ for all $i \in \N$ and $x \in \calX$. Using the same reasoning as in Theorem~\ref{thm:non-uniform-mistake-bound}, this simplifies to

    \[\sum_{t = 1}^T (1 - r_i(x_t, \hat{y}_t)) \leq \frac{\log_2\pi^2/6}{\gamma} + (1 + 2\gamma)\sum_{t = \lfloor \log_2 i\rfloor}^{T}(1 - r_i(x_t,y_t)) + (1 + 2/\gamma)\log_2 i\]

    Substituting this into our reduction (Lemma~\ref{lem:reduction}),  we conclude that we can construct a generator with the following guarantee for every $T$ and $i$:

    \[\sum_{t = 1}^T \mathbf{1}[\hat x_t \not\in L_i] \leq (1 + 2\gamma)\sum_{t = \lfloor \log_2 i\rfloor}^T \mathbf{1}[x_t \not\in L_i] + (1 + 2/\gamma)\log_2 i,\]

    Thus matching the promised guarantee.

\end{proof}

\subsection{Learning from Demonstrations with Infinite Reward Functions}\label{sec:lfd-inf-rewards}
While we can directly leverage the results of \cite{joshi2025learning} to derive mistake bounds for finite language classes via our reduction in Lemma~\ref{lem:reduction}, the problem of language generation is particularly concerned with settings in which we must learn to generate well from a \emph{countably infinite stream} of languages. In this section, we extend the results of \cite{joshi2025learning} and give an algorithm that provides non-uniform guarantees for an infinite stream of reward functions, $\calR = r_1, r_2, \dots$. 

\begin{algorithm}[H]
    \DontPrintSemicolon
    \SetAlgoNoLine
    \caption{Learning from Arbitrary Demonstrators with an Infinite Stream of Reward Functions.}
    \label{alg:inf-reward-stream}
    
    \KwIn{Context space $\mathcal{X}$, label space $\mathcal{Y}$, infinite stream of reward functions $\mathcal{R} = \{r_1, r_2, \dots\}$ (with $r: \mathcal{X} \times \mathcal{Y} \rightarrow [0, 1]$), weights $w_0:\mathbb{N} \rightarrow \mathbb{R}_{\geq 0}$, growth function $f: \mathbb{N} \rightarrow \mathbb{N}$, parameter $\gamma \in (0, 3/4] \cup \{1\}$}
    
    \For{$t = 1, 2, \dots, \infty$}{
        Observe $x_t$\;\\
        Output $\hat y_t := \arg\max_{y \in \mathcal{Y}}\sum_{i = 1}^{f(t)}w_{t-1}(i)r_i(x_t, y)$\;\\
        Observe adversary's $y_t$\;\\
        
        \BlankLine
        \tcp{Update weights for $i \in [f(t)]$}
        \uIf{$\gamma \in (0, 3/4]$}{
            $w_t(i) \leftarrow w_{t - 1}(i)(1 + \gamma)^{\lambda_i(x_t, \hat y_t)}(1 - \gamma)^{\lambda_i(x_t, y_t)}$\;\\
            where $\lambda_i(x, y) := \sup_{y' \in \mathcal{Y}}r_i(x, y') - r_i(x, y)$\;
        }
        \ElseIf{$\gamma = 1$}{
            $w_t(i) \leftarrow \begin{cases} 
                0 & \text{if } r_i(x_t, y_t) = 0 \\ 
                w_{t-1}(i) & \text{if } r_i(x_t, y_t) = r_i(x_t, \hat y_t) = 1 \\ 
                2 w_{t-1}(i) & \text{if } r_i(x_t, y_t) = 1, r_i(x_t, \hat y_t) = 0 
            \end{cases}$\;
        }

        \BlankLine
        \tcp{Initialize weights for $i \in \{f(t) + 1, \dots, f(t + 1)\}$}
        \uIf{$\gamma \in (0, 3/4]$}{
            $w_t(i) = w_0(i)$\;
        }
        \ElseIf{$\gamma = 1$}{
            $w_t(i) = \begin{cases} 
                w_0(i) & \text{if } r(x_j, y_j) = 1 \text{ for all } j \in [t] \\ 
                0 & \text{otherwise} 
            \end{cases}$\;
        }
    }
\end{algorithm}

It will be helpful to define the following notation. For any $i \in \N$ and $T \in \N$, denote 

\[\hat M_{i, alg}^T := \sum_{t = 1}^T r_i(x_t, \hat y_t), \quad M_{i, opt}^T := \sum_{t = 1}^T \sup_{y \in\calY} r_i(x_t, y), \quad R_{i}(T':T) = \sum_{t = T'}^T\left(\sup_{y \in\calY} r_i(x_t, y) - r_i(x_t, y_t)\right).\]

Note that in the binary rewards case, if we assume that every context has at least one high-reward label, $M_{i, opt}^T$ is always equal to $T$. 

We present the following two results about Algorithm~\ref{alg:inf-reward-stream}, one for the optimal demonstrator and binary reward case, and the other for sub-optimal demonstrators. 

\begin{restatable}{thm}{LfDinfstream}\label{thm:lfd-inf-stream}
    For any infinite stream $\calR = r_1, r_2, ...$ of binary reward functions, and any online sequence $(x_t, y_t)$ satisfying $r_i(x_t, y_t) = 1$ for some $i \in \N$ and all $t \in \N$, initial weight function $w_0: \N \rightarrow \R_{\geq 0}$ satisfying $\sum_{i = 1}^{\infty}w_0(i) \leq W < \infty$, non-decreasing growth function $f:\N \rightarrow \N$, after the end of $T$ rounds, Algorithm~\ref{alg:inf-reward-stream} with input $\gamma = 1$ guarantees that for any $r_i$ such that $r_i(x_t, y_t) = 1$ for all $t \in [T]$,
    \[\sum_{t = 1}^{\infty} (1 - r_i(x_t, \hat y_t)) \leq \log_2(W/w_0(i)) + f^{-1}(i). \]
\end{restatable}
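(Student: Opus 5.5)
The plan is to replay the potential-function argument from the proof of Theorem~\ref{thm:general-recipe} (Lemma~\ref{lem:weight_fn_bound}), now in the LfD language with contexts $x_t$ and binary rewards $r_i$. The argument has four steps.

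\textbf{Step 1 (potential).} Define $W_t := \sum_{i=1}^{f(t+1)} w_t(i)$, with $W_0 := \sum_{i=1}^{f(1)} w_0(i)$. We will show
\[W_t \leq W_{t-1} + \sum_{i=f(t)+1}^{f(t+1)} w_0(i).\]
Newly initialized weights are at most $w_0(i)$, since they are set to either $w_0(i)$ or $0$. So it suffices to show $\sum_{i \le f(t)} w_t(i) \le W_{t-1}$.

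\textbf{Step 2 (one-step bound on surviving weight).} Let $I_{t-1} = \{i \le f(t): w_{t-1}(i) > 0\}$. Under the $\gamma=1$ update, the surviving indices are exactly those with $r_i(x_t,y_t)=1$. Among them, weight doubles precisely when $r_i(x_t,\hat y_t)=0$. Hence
\[\sum_{i\le f(t)} w_t(i) = \sum_{i \in I_{t-1}: r_i(x_t,y_t)=1} w_{t-1}(i) + \sum_{i\in I_{t-1}: r_i(x_t,y_t)=1,\, r_i(x_t,\hat y_t)=0} w_{t-1}(i).\]
Enlarging the second sum to all $i$ with $r_i(x_t,\hat y_t)=0$ only increases it. The right-hand side is then at most
\[W_{t-1} + \sum_{i} w_{t-1}(i)\,r_i(x_t,y_t) - \sum_i w_{t-1}(i)\,r_i(x_t,\hat y_t).\]
The last difference is $\le 0$ because $\hat y_t$ maximizes $\sum_{i\le f(t)} w_{t-1}(i) r_i(x_t,\cdot)$ over all of $\calY$. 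There is no ``unseen'' constraint here, so this is simpler than in the language case.

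\textbf{Step 3 (telescoping).} Telescoping Step 1 gives $W_t \le \sum_{i \le f(t+1)} w_0(i) \le W$ for every $t$.

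\textbf{Step 4 (counting mistakes on $r_i$).} Fix $r_i$ that is consistent with the demonstrations through round $T$. For $t \le f^{-1}(i)$, index $i$ is not yet active, so we charge at most one mistake per round, for at most $f^{-1}(i)$ mistakes in total.

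When $r_i$ is introduced at the end of round $f^{-1}(i)$, its weight is set to $w_0(i)$. The initialization condition requires that $r_i$ be consistent with all past demonstrations, which holds by assumption. (I read the condition ``$r(x_j,y_j)=1$'' in the algorithm as $r_i$.) The weight of $r_i$ is never zeroed afterward. Each subsequent round with $r_i(x_t,\hat y_t)=0$ doubles it, and other rounds leave it unchanged. So if $M$ such mistakes occur up to time $T$, then
\[2^M w_0(i) = w_T(i) \le W_T \le W,\]
which gives $M \le \log_2(W/w_0(i))$. Adding the two phases yields the claimed bound for the sum up to $T$.

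\textbf{Main obstacle.} The statement sums to $\infty$ while assuming consistency only on $[T]$. I would handle this by proving the bound for the partial sum through $T$ under consistency on $[T]$. In the realizable case, where $r_i$ is consistent at all times, we can then let $T\to\infty$ to get the infinite sum.

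A second point to check is the boundary convention for $f^{-1}$: $r_i$ must be active from round $f^{-1}(i)+1$ onward. This follows because $f$ is non-decreasing, so $f(t) \ge i$ for all $t > f^{-1}(i)$.
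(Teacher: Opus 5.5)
Your proposal is correct and follows the paper's proof. The paper likewise bounds the total weight by $W$ via the one-step inequality of Lemma~\ref{lem:lfd_weight_fn_bound}. It charges at most $f^{-1}(i)$ mistakes before $r_i$ is introduced and at most $\log_2(W/w_0(i))$ doublings afterward, and it handles the infinite sum by noting that the partial-sum bound does not depend on $T$. The one difference is that you verify the one-step bound directly for the zero/keep/double rule, in the style of Lemma~\ref{lem:weight_fn_bound}, whereas the paper cites Lemma~\ref{lem:lfd_weight_fn_bound}, whose written proof uses the multiplicative update, so your Step 2 is slightly more careful.
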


\begin{restatable}{thm}{LfDinfstreamNoisy}\label{thm:lfd-inf-stream-noisy}
    For any infinite stream $\calR = r_1, r_2, ...$, any online sequence $(x_t, y_t)$, initial weight function $w_0: \N \rightarrow \R_{\geq 0}$ satisfying $\sum_{i = 1}^{\infty}w_0(i) \leq W < \infty$, non-decreasing growth function $f:\N \rightarrow \N$, and $\gamma \in (0, 3/4]$, after the end of $T$ rounds, Algorithm~\ref{alg:inf-reward-stream} guarantees that 
    \[M_{i, opt}^T - \hat{M}_{i, alg}^T \leq \log_2(W/w_0(i))/\gamma + (1 + 2\gamma)R_i(f^{-1}(i) + 1: T) + f^{-1}(i). \]
\end{restatable}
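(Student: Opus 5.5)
We will prove Theorem~\ref{thm:lfd-inf-stream-noisy} with a potential-function argument in the style of \cite{joshi2025learning}. The new ingredient, compared to their proof, is the lazy introduction of reward functions via $f$ and the prior $w_0$, which we handle exactly as in Lemma~\ref{lem:weight_fn_bound}. Fix $i$ and $T$, and write $\lambda_j^t := \lambda_j(x_t,\hat y_t)$ and $\mu_j^t := \lambda_j(x_t,y_t)$. Both lie in $[0,1]$ because rewards lie in $[0,1]$. Define the potential $W_t := \sum_{j=1}^{f(t+1)} w_t(j)$.

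\textbf{Step 1 (potential is bounded).} We first show $W_t \le W_{t-1} + \sum_{j=f(t)+1}^{f(t+1)} w_0(j)$. Unrolling this then gives $W_t \le W$ for all $t$. The new languages contribute exactly $w_0(j)$, so it suffices to show that the multiplicatively updated old weights do not grow in total, i.e.\ $\sum_{j\le f(t)} w_{t-1}(j)(1+\gamma)^{\lambda_j^t}(1-\gamma)^{\mu_j^t} \le \sum_{j\le f(t)} w_{t-1}(j)$. We use the convexity bounds $(1+\gamma)^a \le 1+\gamma a$ and $(1-\gamma)^b \le 1-\gamma b$ for $a,b\in[0,1]$. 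This bounds the product by $1+\gamma(\lambda_j^t - \mu_j^t) - \gamma^2\lambda_j^t\mu_j^t \le 1 + \gamma(\lambda_j^t-\mu_j^t)$. Now $\lambda_j^t - \mu_j^t = r_j(x_t,y_t) - r_j(x_t,\hat y_t)$. Since $\hat y_t$ maximizes $\sum_{j\le f(t)} w_{t-1}(j) r_j(x_t,\cdot)$, the weighted sum of these differences is $\le 0$, which proves the step. If the argmax is not attained because the label space is infinite, we can take a near-maximizer and pass to the limit, or assume attainment, since the binary/language case always attains it.

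\textbf{Step 2 (lower bound on $w_T(i)$).} Let $t_0 = f^{-1}(i)+1$ be the first round at which $r_i$ is active. Then
\[ w_T(i) = w_0(i)\,(1+\gamma)^{\sum_{t=t_0}^T \lambda_i^t}(1-\gamma)^{\sum_{t=t_0}^T \mu_i^t}. \]
Combining this with $w_T(i)\le W$ and taking logarithms gives
\[ \Bigl(\textstyle\sum_{t\ge t_0}\lambda_i^t\Bigr)\ln(1+\gamma) \le \ln(W/w_0(i)) + \Bigl(\textstyle\sum_{t\ge t_0}\mu_i^t\Bigr)\ln\tfrac{1}{1-\gamma}. \]
Note that $\sum_{t\ge t_0}\mu_i^t = R_i(f^{-1}(i)+1:T)$.

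\textbf{Step 3 (constants).} We need two numerical inequalities on $(0,3/4]$:
\[ \ln(W/w_0(i))/\ln(1+\gamma) \le \log_2(W/w_0(i))/\gamma, \qquad \ln\tfrac{1}{1-\gamma}/\ln(1+\gamma) \le 1+2\gamma. \]
The first holds because $\ln(1+\gamma)\ge \gamma\ln 2$ for $\gamma\le1$ by concavity. The second is the standard Joshi et al.\ calculation, and it is what forces $\gamma\le 3/4$. Rounds $t<t_0$ contribute at most $1$ each, for a total of at most $f^{-1}(i)$. The left side $M_{i,opt}^T-\hat M_{i,alg}^T$ equals the sum of these pre-activation terms plus $\sum_{t\ge t_0}\lambda_i^t$. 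Adding everything up gives the claimed bound.

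The main obstacle is Step 3's second inequality: checking $-\ln(1-\gamma)\le(1+2\gamma)\ln(1+\gamma)$ on the whole interval. Our plan is to verify it at the endpoints and show that the difference function is monotone (or concave), by differentiating. If that turns out to be delicate near $3/4$, we will fall back on citing the corresponding lemma in \cite{joshi2025learning}.
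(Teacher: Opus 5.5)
Your proposal is correct and follows the paper's proof essentially step for step: the potential bound is exactly Lemma~\ref{lem:lfd_weight_fn_bound} (same chord bounds and argmax step), followed by $w_T(i)\le W$ with the closed-form weight, the two numerical inequalities on $(0,3/4]$, and $f^{-1}(i)$ for the pre-activation rounds. The paper simply asserts $\ln\frac{1}{1-\gamma}\le(1+2\gamma)\ln(1+\gamma)$; if you verify it yourself, note that $g(\gamma)=(1+2\gamma)\ln(1+\gamma)+\ln(1-\gamma)$ is neither monotone nor concave there (it is convex near $0$ and decreasing near $3/4$), but $g''$ is decreasing, so $g$ is convex-then-concave, and then $g(0)=g'(0)=0$ together with $g(3/4)\approx 0.013>0$ suffices.
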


The proof of Theorem~\ref{thm:lfd-inf-stream} follows a similar potential argument to that of Theorem~\ref{thm:general-recipe}. We begin by bounding the growth of the total weight using the following lemma:

\begin{restatable}{lem}{weightfnBdLfD}\label{lem:lfd_weight_fn_bound}
    Define $W_t := \sum_{i = 1}^{f(t+1)}w_t(i)$. Then, for any step $t \geq 1$, 
    \[W_t \leq W_{t - 1} + \sum_{i = f(t) + 1}^{f(t+1)} w_0(i).\]
\end{restatable}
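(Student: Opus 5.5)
The plan follows the proof of Lemma~\ref{lem:weight_fn_bound} essentially verbatim, with rewards $r_i(x_t,\cdot)$ in place of language indicators. We work with Algorithm~\ref{alg:inf-reward-stream} in the $\gamma = 1$ case; the $\gamma\in(0,3/4]$ case needs a separate argument, but that is not what this lemma is used for. The first step splits $W_t$ into the languages already tracked at time $t$ and the newly introduced ones:
\[W_t = \sum_{i=1}^{f(t)} w_t(i) + \sum_{i=f(t)+1}^{f(t+1)} w_t(i).\]
By the initialization rule, each new weight is either $w_0(i)$ or $0$. So the second sum is at most $\sum_{i=f(t)+1}^{f(t+1)} w_0(i)$, and it suffices to show $\sum_{i=1}^{f(t)} w_t(i) \le \sum_{i=1}^{f(t)} w_{t-1}(i) = W_{t-1}$.

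For that, let $I_t=\{i\le f(t): w_t(i)\neq 0\}$ and $I_{t-1}=\{i\le f(t): w_{t-1}(i)\neq0\}$. The update zeroes out any $i$ with $r_i(x_t,y_t)=0$, so $I_t\subseteq I_{t-1}\cap\{i: r_i(x_t,y_t)=1\}$. Every surviving weight is either kept or doubled, with doubling exactly when $r_i(x_t,\hat y_t)=0$. Hence
\[\sum_{i\in I_t} w_t(i) = \sum_{i\in I_t} w_{t-1}(i) + \sum_{i\in I_t,\, r_i(x_t,\hat y_t)=0} w_{t-1}(i).\]
Enlarging the index sets (weights are nonnegative) bounds this by
\[\sum_{i\in I_{t-1}} w_{t-1}(i)\, r_i(x_t,y_t) + \sum_{i\in I_{t-1}} w_{t-1}(i)\,(1-r_i(x_t,\hat y_t)).\]
Using binary rewards, this equals
\[W_{t-1} + \sum_{i\in I_{t-1}} w_{t-1}(i)\, r_i(x_t,y_t) - \sum_{i\in I_{t-1}} w_{t-1}(i)\, r_i(x_t,\hat y_t).\]

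The key step is to show this last difference is nonpositive, which comes from the choice of $\hat y_t$. The algorithm picks $\hat y_t=\arg\max_{y\in\calY}\sum_{i=1}^{f(t)} w_{t-1}(i)r_i(x_t,y)$, and indices outside $I_{t-1}$ have zero weight. So the weighted reward of $\hat y_t$ is at least that of any $y\in\calY$, in particular the demonstration $y_t$. This is the only place where the argmax rule matters.

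One point needs care. In the language setting, $\hat x_t$ and $x_t$ range over the same feasible set $\calX\setminus x_{1:t-1}$. Here the argmax is over all of $\calY$, and $y_t\in\calY$, so the comparison is direct and no feasibility issue arises. I would also note that the supremum in the argmax should be attained; I would assume this, or argue that it holds for binary rewards since the objective takes finitely many values on finitely many weights.

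Combining the two bounds gives $W_t\le W_{t-1}+\sum_{i=f(t)+1}^{f(t+1)}w_0(i)$, which is the claim.
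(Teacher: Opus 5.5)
Your argument for the $\gamma=1$ branch is correct. It is the proof of Lemma~\ref{lem:weight_fn_bound} with rewards in place of language indicators. However, it omits the branch that the paper's proof of this lemma actually handles.

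The claim that the $\gamma\in(0,3/4]$ case ``is not what this lemma is used for'' is wrong. The proof of Theorem~\ref{thm:lfd-inf-stream-noisy}, and through it Lemma~\ref{lem:noisy-infinite}, obtains $w_T(i)\le W_T\le W$ from exactly this lemma. There the update is multiplicative: $w_t(i)=w_{t-1}(i)(1+\gamma)^{\lambda_i(x_t,\hat y_t)}(1-\gamma)^{\lambda_i(x_t,y_t)}$. Your bookkeeping does not carry over to that setting. Weights are never zeroed, rewards may be fractional in $[0,1]$, and there is no kept-or-doubled dichotomy, so the index sets $I_t\subseteq I_{t-1}$ play no role.

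The missing idea is a linearization of the exponentials. The map $u\mapsto(1\pm\gamma)^u$ is convex, so it lies below its chord on $[0,1]$. This gives $(1+\gamma)^u\le 1+\gamma u$ and $(1-\gamma)^u\le 1-\gamma u$. Both bounds apply because $\lambda_i\in[0,1]$ when rewards lie in $[0,1]$. Expanding the product then gives
\[
\sum_{i\le f(t)}\bigl(w_t(i)-w_{t-1}(i)\bigr)\le \gamma\sum_{i\le f(t)}w_{t-1}(i)\bigl(\lambda_i(x_t,\hat y_t)-\lambda_i(x_t,y_t)\bigr)-\gamma^2\sum_{i\le f(t)}w_{t-1}(i)\lambda_i(x_t,\hat y_t)\lambda_i(x_t,y_t).
\]
The $\sup$ terms inside $\lambda_i$ cancel, so the first sum equals $\gamma\sum_i w_{t-1}(i)\bigl(r_i(x_t,y_t)-r_i(x_t,\hat y_t)\bigr)$. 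This is $\le 0$ by the argmax choice of $\hat y_t$, which is the same comparison step you already use. The second term is trivially $\le 0$, and in this branch new weights are initialized to exactly $w_0(i)$.

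This one argument also covers $\gamma=1$ for binary rewards with $\sup_y r_i(x,y)=1$. With the convention $0^0=1$, the zero/keep/double rule is exactly $2^{\lambda_i(x_t,\hat y_t)}0^{\lambda_i(x_t,y_t)}$, and the chord bounds still hold at $\gamma=1$. Consistency-gated initialization only lowers the new weights. So the paper's route subsumes your case analysis, while yours does not reach the case the paper needs.
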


\begin{proof}
    We re-express $W_t$ by decomposing the weights:
    \begin{align*}
        W_t &= \sum_{i = 1}^{f(t)} w_t(i) + \sum_{i = f(t) + 1}^{f(t + 1)} w_t(i) \\
        &\leq \sum_{i = 1}^{f(t)} w_t(i) + \sum_{i = f(t) + 1}^{f(t + 1)} w_0(i) \tag{Definition of weight initialization}
    \end{align*}

    Thus, it suffices to prove that $\sum_{i = 1}^{f(t)} w_t(i) \leq W_{t - 1} = \sum_{i = 1}^{f(t)} w_{t-1}(i) $. 

    We go about this by showing that $\sum_{i = 1}^{f(t)} w_t(i) - W_{t - 1} \leq 0$:

    \begin{align*}
        &\sum_{i = 1}^{f(t)} \left(w_t(i) - w_{t - 1}(i)\right)\\
        &=  \sum_{i = 1}^{f(t)} \left(w_{t - 1}(i)(1 + \gamma)^{\lambda_i(x_t, \hat y_t)}(1 - \gamma)^{\lambda_i(x_t, y_t)} - w_{t - 1}(i)\right)\\
        &\leq \sum_{i = 1}^{f(t)} \left(w_{t - 1}(i)(1 + \gamma\lambda_i(x_t, \hat y_t))(1 - \gamma\lambda_i(x_t, y_t)) - w_{t - 1}(i)\right)
    \end{align*}

    where the upper bound follows from the fact that $(1 +\gamma)^u \leq 1 + \gamma u$ and $(1 - \gamma)^u \leq 1 - \gamma u$ for any $u \in [0, 1]$ and $\gamma \in [0, 1]$. Note that because the range of rewards is $[0, 1]$, $\lambda_i(x, y) \in [0, 1]$ as well. 

    Simplifying the equation, we can equivalently rewrite as 
    
    \begin{align*}
        &\gamma\sum_{i = 1}^{f(t)} w_{t - 1}(i)\left(\lambda_i(x_t, \hat y_t) - \lambda_i(x_t, y_t)\right) - \gamma^2\sum_{i = 1}^{f(t)} w_{t - 1}(i)\lambda_i(x_t, \hat y_t)\lambda_i(x_t, y_t))\\
        &= \gamma\sum_{i = 1}^{f(t)} w_{t - 1}(i)\left(r_i(x_t, y_t) - r_i(x_t, \hat{y}_t)\right) - \gamma^2\sum_{i = 1}^{f(t)} w_{t - 1}(i)\lambda_i(x_t, \hat y_t)\lambda_i(x_t, y_t))\\
    \end{align*}

    By definition of our choice of $\hat y_t$, the first term is upper bounded by 0, and because all weights and $\gamma$ values are non-negative, the second term is also upper-bounded by 0, and we conclude that the difference is at most 0, proving the lemma. 
\end{proof}

We now proceed to prove Theorem~\ref{thm:lfd-inf-stream-noisy}.

\begin{proof}[Proof of Theorem~\ref{thm:lfd-inf-stream-noisy}]
        We first note that unrolling Lemma~\ref{lem:lfd_weight_fn_bound} gives a constant upper bound on any $W_t$:
        \[W_t \leq W_{t - 1} + \sum_{i = f(t) +1}^{f(t+ 1)} w_0(i) \leq \sum_{i = 1} \leq \sum_{i = 1}^{f(t + 1)} w_0(i) \leq W.\]

        For a particular reward function $r_i$, the first step where $r_i$ is considered by the algorithm is at $f^{-1}(i) + 1$. 
        We focus on how much additional regret the algorithm can incur after first introducing $r_i$. 
        
        At any timestep $T > f^{-1}(i)$, suppose the algorithm has accumulated $\hat{R}_i(T)$ regret compared to the reward-maximizing labels for $r_i$ since initially being considered by the algorithm at $t = f^{-1}(i) + 1$. Similarly denote $R_i(T)$ as the regret suffered by the demonstrator after timestep $f^{-1}(i)$. By definition of our algorithm, the weight on $r_i$ is equal to 

        \[w_T(i) = w_0(i)(1 + \gamma)^{\hat{R}_i(T)}(1 - \gamma)^{R_i(f^{-1}(i) + 1: T)} \leq W_T \leq W.\]

        Where we get an upper bound of $W$ from Lemma~\ref{lem:lfd_weight_fn_bound}. Solving for $\hat{R}_i(T)$, we conclude that

        \[\hat{R}_i(T) \leq \log_2(W/w_0(i))/\log_2(1 + \gamma) + R_i(f^{-1}(i) + 1: T)\left(\log_2(\frac{1}{1 - \gamma})/\log_2(1 + \gamma)\right).\]

        Noting that for $\gamma \in (0, 3/4]$, we have $\frac{1}{\log_2(1 + \gamma)}\leq 1/\gamma$ and $\frac{\log_2(1/(1 - \gamma))}{\log_2(1 + \gamma)}\leq 1 + 2\gamma$, and thus we conclude that

        \[\hat{R}_i(T) \leq \log_2(W/w_0(i))/\gamma + (1 + 2\gamma)R_i(f^{-1}(i) + 1: T).\]
        
        Thus we have bounded the regret after $r_i$ is first considered. We can bound the regret before $r_i$ is first considered as $f^{-1}(i)$ because the rewards line in $[0, 1]$, and thus conclude a total bound on the regret of 

        \[M_{i, opt}^T - \hat{M}_{i, alg}^T \leq \log_2(W/w_0(i))/\gamma + (1 + 2\gamma)R_i(f^{-1}(i) + 1: T) + f^{-1}(i). \]
\end{proof}

We now prove the optimal demonstrator case for binary rewards:

\begin{proof}[Proof of Theorem~\ref{thm:lfd-inf-stream}]
        We first note that unrolling Lemma~\ref{lem:lfd_weight_fn_bound} gives a constant upper bound on any $W_t$:
        \[W_t \leq W_{t - 1} + \sum_{i = f(t) +1}^{f(t+ 1)} w_0(i) \leq \sum_{i = 1} \leq \sum_{i = 1}^{f(t + 1)} w_0(i) \leq W.\]

        For a particular reward function $r_i$, the first step where $r_i$ is considered by the algorithm is at $f^{-1}(i) + 1$. 
        We focus on how much additional regret the algorithm can incur after first introducing $r_i$. 
        
        At any timestep $T > f^{-1}(i)$, suppose there have been $M$ steps where the algorithm outputted a $\hat y_t$ with 0 reward on $r_i$ since initially being considered by the algorithm at $t = f^{-1}(i) + 1$. By definition of our algorithm, the weight on $r_i$ is equal to 

        \[w_T(i) = w_0(i)2^M \leq W_T \leq W.\]

        Where we get an upper bound of $W$ from Lemma~\ref{lem:lfd_weight_fn_bound}. Solving for $M$, we conclude that

        \[M \leq \log_2(W/w_0(i)).\]

        Thus we have bounded the number of mistakes after $r_i$ is first considered. We can bound the number of mistakes before $r_i$ is first considered as $f^{-1}(i)$, and thus conclude a total bound on the number of mistakes of  

        \[\sum_{t = 1}^T (1 - r_i(x_t, \hat{y}_t)) \leq \log_2(W/w_0(i)) + f^{-1}(i).\]

        We conclude that this bound holds for all such $T$, as the right-hand-side has no dependence on $T$. 
\end{proof}

\end{document}